\documentclass[lettersize,journal]{IEEEtran}
\usepackage{amsmath,amsfonts}
\usepackage{algorithmic}
\usepackage{algorithm}
\usepackage{array}
\usepackage[caption=false,font=normalsize,labelfont=sf,textfont=sf]{subfig}
\usepackage{textcomp}
\usepackage{stfloats}
\usepackage{url}
\usepackage{verbatim}
\usepackage{graphicx}
\usepackage{cite}
\usepackage[colorlinks,citecolor=blue,linkcolor=blue]{hyperref}
\newtheorem{theorem}{Theorem}

\newtheorem{lemma}{Lemma}
\newtheorem{assumption}{Assumption}
\newtheorem{proposition}{Proposition}

\newtheorem{problem}{Problem}
\newtheorem{definition}{Definition}
\newtheorem{remark}{Remark}

\newtheorem{proof}{Proof}
\newcommand{\be}{\begin{eqnarray}}
\newcommand{\ee}{\end{eqnarray}}
\newcommand{\bq}{\begin{equation}}
\newcommand{\eq}{\end{equation}}
\newcommand{\e}{\mathbb{E}}
\newcommand{\q}{\quad}
\newcommand{\qq}{\qquad}

\begin{document}

\title{Entropy-Regularized Reinforcement Learning for\\ Linear-Quadratic Stackelberg Differential Games in \\Regime-Switching Diffusion Models}
\author{
\thanks{This study was funded by the National Key R \& D Program of China (2022YFA1007900), Natural Science Foundation of Anhui Province (2408085MA019), National Natural Science Foundation of China (12271171), Shanghai Philosophy Social Science Planning Office Project (2022ZJB005), and Fundamental Research Funds for the Central Universities (2024QKT008).
(Corresponding author: Lin Xu.)}
Congde Hu, \thanks{Congde Hu is with the School of Mathematics and Statistics, Anhui Normal University, Wuhu, Anhui  241002, China (e-mail:hcd.math@gmail.com).}
\and Danping Li, \thanks{Danping Li is with the School of Statistics, Key Laboratory of Advanced Theory and Application in Statistics and Data Science-MOE,
Faculty of Economics and Management, East China Normal University, Shanghai, China (e-mail:dpli@fem.ecnu.edu.cn).}
\and Lin Xu, \thanks{Lin Xu is with the School of Mathematics and Statistics, Anhui Normal University, Wuhu, Anhui  241002, China (e-mail:xulinahnu@gmail.com).}
\and Wenying Xu, \thanks{Wenying Xu is with the School of Mathematics, Southeast University, Nanjing, Jiangsu 211189, China (e-mail: wyxu@seu.edu.cn ).}\IEEEmembership{Senior Member, IEEE}
}
\markboth{IEEE Transactions on xxx}{Hu \MakeLowercase{\textit{et al.}}: Entropy-Regularized Reinforcement Learning for Linear-Quadratic Stackelberg Differential Games in Continuous-Time Diffusion Models}

\IEEEpubid{0000--0000/00\$00.00~\copyright~2025 IEEE}

\maketitle

\begin{abstract}
Stackelberg differential games (SDGs) provide a powerful framework for hierarchical decision-making in stochastic and continuous-time environments, yet their solution remains computationally challenging due to the complexity of traditional dynamic programming and Hamilton-Jacobi-Bellman-Isaacs (HJBI) methods, especially in high-dimensional systems. This paper proposes an entropy-regularized reinforcement learning (ERRL) approach for linear-quadratic SDGs (LQ-SDGs) within a continuous-time diffusion framework governed by Markovian regime switching. The key innovation lies in deriving exploratory weakly-coupled HJBI equations with entropy regularization, which promotes stochastic policies that actively avoid suboptimal equilibria---a limitation of classical SDG methods. Neural networks are integrated to approximate regime-dependent value functions and solve high-dimensional partial differential equations (PDEs) efficiently, while a novel sampling technique enhances computational tractability. Numerical results demonstrate the effectiveness of the framework compared to conventional approaches, particularly in escaping suboptimal traps through exploratory policies. The study highlights the critical role of entropy regularization and neural network approximations in achieving robust solutions for hierarchical decision-making problems under abrupt environmental shifts.
\end{abstract}

\begin{IEEEkeywords}
Stackelberg differential game, Reinforcement learning, Entropy regularization, HJBI equation, Regime switching, Policy improvement algorithm.
\end{IEEEkeywords}

\section{Introduction}
\subsection{Background and motivation}
\IEEEPARstart{S}{}tackelberg differential games (SDGs)
represent a cornerstone in the field of game theory, offering a robust framework for modeling hierarchical decision-making processes. The concept of Stackelberg games was first introduced by \cite{stackelberg1934market}, which sought to describe economic scenarios where one player (the leader) has a dominant role over another (the follower). This hierarchical structure has since been extended to continuous-time and stochastic settings, giving rise to SDGs, which have found applications in a wide range of disciplines, including economics \cite{basar1999dynamic}, engineering \cite{10945872}, robotics \cite{xing2024denial}, supply chain optimization \cite{pakseresht2020toward}, autonomous vehicles \cite{10440182}, financial markets \cite{sockin2023decentralization}.

Despite their versatility, solving SDGs remains a challenging task due to the complexity of deriving optimal strategies in continuous-time and stochastic environments. Traditional methods often rely on dynamic programming principles (DPP) and the Hamilton-Jacobi-Bellman-Isaacs (HJBI) equations \cite{yong1999stochastic}. However, these approaches can become computationally intractable in high-dimensional or nonlinear systems, limiting their applicability to real-world problems.

Reinforcement learning (RL) has emerged as a powerful paradigm for solving complex decision-making problems, particularly in scenarios where traditional methods fall short. Unlike classical optimization techniques, RL leverages data-driven approaches to learn optimal policies through interaction with the environment. This makes RL particularly well-suited for problems involving uncertainty, incomplete information, and dynamic environments \cite{sutton2018reinforcement,han2020spatial}. entropy-regularized reinforcement learning (ERRL)---a variant of RL that incorporates an entropy term into the objective function, has gained significant attention in recent years. The entropy term encourages exploration by promoting stochasticity in the policy, thereby \emph{preventing premature convergence to suboptimal solutions}. This approach has been shown to improve the robustness and stability of learning algorithms, making it particularly effective in high-dimensional and continuous action spaces \cite{mceneaney2007curse}.
\IEEEpubidadjcol
The integration of entropy regularization into RL has led to the development of several state-of-the-art algorithms, such as soft actor-critic (SAC) \cite{haarnoja2018soft} and maximum entropy deep RL \cite{zhao2019maximum}. These algorithms have demonstrated superior performance in a wide range of applications, including robotics, game theory, and control systems. For instance, \cite{miao2025} demonstrated the effectiveness of ERRL in multi-agent systems, where the stochasticity of policies facilitated better coordination among agents.

In the context of SDGs, ERRL offers several advantages. First, the stochastic nature of entropy-regularized policies aligns well with the hierarchical structure of SDGs, enabling the leader and follower to adapt their strategies dynamically. Second, the exploration-promoting properties of entropy regularization enhance the robustness of learning algorithms, particularly in stochastic environments. Finally, the probabilistic interpretation of entropy-regularized policies facilitates derivation of smooth and interpretable strategies, which are crucial for real-world applications \cite{zhao2019maximum}.

Despite these advancements, the integration of entropy regularization into RL for SDGs remains underexplored, particularly in the context of continuous-time diffusion models. Diffusion models provide a natural framework for modeling stochastic dynamics in continuous time and have recently gained attention in RL due to their ability to handle uncertainty and noise effectively \cite{wang2019exploration,tang2022exploratory}. Furthermore, many real-world systems are not only subject to continuous random fluctuations but also experience abrupt structural changes due to macroeconomic shifts, environmental variations, or component failures. Markovian regime switching models provide a powerful mathematical tool to capture such sudden transitions. Integrating regime switching into the ERRL framework for SDGs is therefore practically significant but mathematically demanding, as it introduces weakly-coupled systems of equations.

\subsection{Main contributions of this paper}
This paper addresses the aforementioned challenges by proposing a novel framework for solving linear-quadratic SDGs (LQ-SDGs) by using ERRL within a continuous-time diffusion framework featuring Markovian regime switching. LQ-SDGs have evolved since their inception, bridging deterministic and stochastic frameworks with applications in economics, engineering, and finance. Early works by \cite{yong2002leader} established deterministic LQ-SDGs, while stochastic extensions incorporated Brownian motion, yielding Riccati-based solutions under adapted strategies. Recent advances address jump-diffusion systems, where \cite{lv2023linear} investigated LQ-SDGs for mean-field switching diffusions, \cite{sun2023zero} studied zero-sum LQ-SDGs, \cite{moon2023linear} derived equilibrium strategies via coupled Riccati equations with L\'{e}vy processes, generalizing results to non-Gaussian noise, \cite{li2024closed} provided closed-loop solvability of mean-field LQ-SDGs. Contemporary research focuses on mean-field LQ-SDGs \cite{yong2013linear,bensoussan2017linear} and time-inconsistent problems \cite{wang2023time}, leveraging backward stochastic differential equations (BSDEs) and leader-follower asymmetry.

The work closely related to the problem discussed in this paper is \cite{li2019two}, which established two coupled forms of the HJBI equations and proved that the solutions of these HJBI equations not only stabilize the system but also constitute the Stackelberg equilibrium strategies. Due to the difficulties in solving the HJBI equations, they also developed an iterative algorithm and conducted simulation examples, which can be found in detail in \cite{li2019two}. However, on the one hand, the controlled system therein does not take into account the impact of continuous random disturbances and sudden environmental shifts (i.e., regime switching). On the other hand, finding the optimal strategies often leads to the so-called \lq\lq curse of optimality." As pointed out in \cite{zhou2021curse}, we strive to seek optimality, but often find ourselves trapped in bad \lq\lq optimal" solutions that are either local optimizers, or too rigid to leave any room for errors, or based on wrong models. A way to break this \lq\lq curse of optimality" is to engage exploration through randomization. In this paper, we will present a dynamic system equation that takes into account additional Brownian motion and Markovian regime switching, and has an exploration mechanism, as well as performance functions that encourage exploration.

The main contributions of this work are summarized as follows:
\begin{itemize}
\item An exploratory system of weakly-coupled HJBI equations for SDGs with Markovian regime switching is derived by applying the DPP. This derivation establishes a rigorous theoretical foundation for the proposed approach while incorporating an entropy regularization term, ensuring that the resulting policies remain stochastic and exploration-promoting across different system regimes.
\item Coupled distributional optimal policies for both the leader and follower are derived, ensuring robustness and adaptability in stochastic environments subject to structural jumps. A partially model-free policy improvement algorithm (PIA) is designed to approximate the regime-dependent value functions within an ERRL framework, enhancing exploration and convergence.
\item The algorithm integrates a novel sampling technique that exploits the structure of the diffusion model, leading to improved computational efficiency. For fast approximating to the value functions across multiple discrete regimes, a neural network approximation architecture for high-dimensional partial differential equations (PDEs) is incorporated into the procedure design.
\end{itemize}

The remainder of this paper is organized as follows. Section \ref{Section 2} introduces the learning framework, including regime-switching dynamics, and formulates the problem. Section \ref{Section 3} derives the coupled HJBI equations for the ERRL LQ-SDGs and characterizes the equilibrium strategies. Section \ref{Section 4} develops a PIA to approximate the value function and compute these strategies. Finally, Section \ref{Section 5} provides numerical examples to demonstrate algorithm convergence, the effect of temperature parameter,  and specifically how the framework avoids the \lq\lq local optimum trap\rq\rq.
\section*{Notations}
 $\mathbb{R}^n$ and $\mathbb{S}^n$ denote $n$-dimensional vectors and symmetric matrices, respectively. For a matrix $A$, its transpose, inverse, and trace are $A^{T}$, $A^{-1}$, and $tr(A)$. $A>0$ ($A\geq0$) means $A$ is positive (semi-)definite, with square root $A^{1/2} = UD^{1/2}U^T$. $|\cdot|$ and $||B||$ are the absolute value and Euclidean norm. $\delta$ is the variational operator, and $\e_{x,i}[\cdot]$ is the conditional expectation given state $x$ and regime $i$. For a function $f(x)$, $\nabla f(x)$ and $\Delta f(x)$ are its gradient and Hessian. Finally, $a \land b$ is the minimum of $a$ and $b$, and $\mathcal{N}(\mu, \Sigma)$ represents the Gaussian distribution.
\section{Formulation and problem}\label{Section 2}
\setcounter{equation}{0}
\renewcommand{\theequation }{2.\arabic{equation}}
This section aims to construct the two-player LQ-SDGs problem with learning. To clarify our objectives, we first introduce a classical two-player Stackelberg game problem. The content in subsection \ref{sub21} mainly comes from \cite{li2019two}. For the convenience of narration, the problem studied by \cite{li2019two} is referred to as the classical two-player Stackelberg game problem.
\subsection{Classical two-player Stackelberg game problem}\label{sub21}
Consider a linear dynamic system with two players
\begin{equation}\label{2.1}
dx_s = \left(Ax_s + B_1u_s + B_2v_s\right)ds,\,s>0
\end{equation}
where $x=\{x_s,s>0\} \in \mathbb{R}^n$ is the measurable system state, $u=\{u_s,s>0\}$ $(resp.,v=\{v_s,s>0\})$$\in \mathbb{R}^p$ is the policy of player $I$ (resp., player $II$), $B_1$ and $B_2$ are matrices with compatible dimensions. Player $I$ is the leader, holding a dominant position that enables it to anticipate player $II$'s response and make its decision $u$ first. Next, player $II$  (the follower) observes $u$ and responds with action $v$. Given this decision-making structure, the performance function for each player is defined as
\begin{equation}\label{2.2}
J^{cl}_k(x, u, v) = \int_0^{\infty} r_k(x_s, u_s, v_s) \, ds, \, k = 1, 2,
\end{equation}
with
\begin{equation*}
r_1(x, u, v) = x^T Q_1 x + (u + \theta_1 v)^T R_1 (u + \theta_1 v),
\end{equation*}
\begin{equation*}
r_2(x, u, v) = x^T Q_2 x + (v + \theta_2 u)^T R_2 (v + \theta_2 u),
\end{equation*}
and $Q_k \geq 0, R_k > 0$, and $\theta_k \in (0,1)$ for $k = 1, 2$.

Denote the admissible sets of the two players by $\mathcal{U}_{cl}$ and $\mathcal{V}_{cl}$ respectively with
\begin{enumerate}
    \item[(i)] $\int_{0}^{t}||u_s||^{2}ds<\infty$, $\int_{0}^{t}||v_s||^{2}ds<\infty$, $\forall t\geq0$;
    \item[(ii)] with $\{x_s,s\geq0\}$ solving \eqref{2.1}, $\int_0^{\infty}|r_k(x_s,u_s,v_s)|ds<\infty$, $k=1,2$.
\end{enumerate}
\begin{definition}
The classical value functions of player $II$  and player $I$  are given by
\begin{equation}
V^{cl}_2(x;u) = \min_{v\in\mathcal{V}_{cl}}J^{cl}_2(x, u,v)
\end{equation}
and
\begin{equation}
V^{cl}_1(x) = \min_{u\in\mathcal{U}_{cl}} J^{cl}_1(x, u,v^*(u)),
\end{equation}
where $v^*(u)= \arg\min_{v\in\mathcal{V}_{cl}}J^{cl}_2(x, u,v)$.
\end{definition}
For the  dynamic system \eqref{2.1} and performance function \eqref{2.2}, the objective of classical SDG aims to design appropriate policies $u^*$ and $v^*$ satisfying the following \emph{Stacklelberg equilibrium} conditions,
\begin{equation}\label{2.5}
\left\{\begin{array}{ll}
    J^{cl}_1(x, u^*, v^*(u^*))\leq J^{cl}_1(x, u, v^*(u)), \,\forall u\in\mathcal{U}_{cl}; \\
    J^{cl}_2(x, u, v^*(u))\leq J^{cl}_2(x, u, v), \, \forall v\in\mathcal{V}_{cl},
\end{array} \right.
\end{equation}
where $v^*(u)= \arg\min_{v\in\mathcal{V}_{cl}}J^{cl}_2(x, u,v)$.

\subsection{Exploratory two-player LQ-SDGs framework}\label{sub22}
System \eqref{2.1} does not incorporate stochastic effects and environmental variations, which motivates us to introduce Brownian motion and regime switching in our analysis.

Let $\alpha = \{\alpha_s, s \ge 0\}$ be a continuous-time Markov chain defined on a complete probability space $(\Omega, \mathcal{F}, \{\mathcal{F}_s\}_{s \ge 0}, \mathbb{P})$ with a finite state space $\mathcal{M} = \{1, 2, \dots, m\}$. The generator matrix of this Markov chain is defined as $\Gamma = (q_{ij})_{m \times m}$, where $q_{ij} \ge 0$ ($i \neq j$) represents the transition rate from regime $i$ to regime $j$, satisfying $q_{ii} = -\sum_{j=1,j \neq i}^m q_{ij}$.
The dynamical system with regime switching is given by
\begin{equation}\label{2.6}
dx_s = b(x_s, \alpha_s, u_s, v_s)ds + \sigma(x_s, \alpha_s, u_s, v_s)dW_s, s > 0, 
\end{equation}
where the drift and diffusion terms are determined by the current regime $i\in\mathcal{M}$:
\[
b(x, i, u, v) := A(i)x + B_1(i)u + B_2(i)v,
\]
\[
\sigma(x, i, u, v) := C(i)x + D_1(i)u + D_2(i)v.
\]
Here, $W=\{W_s, s \ge 0\}$ is a standard $n$-dimensional Brownian motion, assumed to be independent of the Markov chain $\alpha$. All system parameters (e.g., $A(i), B_1(i)$, etc.) are constant matrices dependent on the current regime $i\in\mathcal{M}$.
Correspondingly, the classical performance function for each player also depends on the current regime $i\in\mathcal{M}$:
\[
r_1(x, i, u, v) = x^T Q_1(i) x + (u + \theta_1(i) v)^T R_1(i) (u + \theta_1(i) v),
\]
\[
r_2(x, i, u, v) = x^T Q_2(i) x + (v+ \theta_2(i) u)^T R_2(i) (v + \theta_2(i) u),
\]
where $Q_k(i) \ge 0, R_k(i) > 0, \theta_k(i) \in (0,1)$ for $k = 1, 2$.

A core principle in RL is environmental exploration through randomized actions. This involves replacing deterministic policies with probability distributions (measures) to facilitate exploration. Entropy quantifies the dispersion of these distributions: low entropy indicates concentration, while high entropy reflects greater randomness. This paper focuses on feedback distributional policies, where the random policy's distribution depends solely on the current state $x$ and regime $i$.

Let $\boldsymbol{\pi} : (x, i) \in \mathbb{R}^n \times \mathcal{M} \rightarrow \boldsymbol{\pi}(\cdot \mid x, i) \in \mathcal{P}(\mathbb{R}^p)$ (resp., $\boldsymbol{\gamma} : (x, i) \in \mathbb{R}^n \times \mathcal{M} \rightarrow \boldsymbol{\gamma}(\cdot \mid x, i) \in \mathcal{P}(\mathbb{R}^p)$) be a given stochastic feedback control, where $\mathcal{P}(\mathbb{R}^p)$ is the set of probability density functions defined on $\mathbb{R}^p$. Let the control actions $u \in \mathbb{R}^p$ and $v \in \mathbb{R}^p$ be sampled from the conditional distributions $\boldsymbol{\pi}(\cdot \mid x, i)$ and $\boldsymbol{\gamma}(\cdot \mid x, i)$, respectively. For notational brevity, when the current state $x$ and regime $i$ are clear from the context, we will often drop the conditioning and write $\boldsymbol{\pi}(u)$ and $\boldsymbol{\gamma}(v)$ to denote $\boldsymbol{\pi}(u \mid x, i)$ and $\boldsymbol{\gamma}(v \mid x, i)$. Inspired by \cite{wang2020reinforcement}, the drift, diffusion, and performance functions are then extended to an exploratory version \eqref{extend1}--\eqref{extend4} given by
\begin{equation}\label{extend1}
\tilde{b}(x,i,\boldsymbol{\pi},\boldsymbol{\gamma}):=\int_{\mathbb{R}^p}\left[\int_{\mathbb{R}^p}b(x,i,u,v)\boldsymbol{\pi}(u)\,du\right]\boldsymbol{\gamma}(v)\,dv,
\end{equation}
\begin{equation}
\begin{aligned}
&\tilde{\sigma}(x,i,\boldsymbol{\pi},\boldsymbol{\gamma})\\
&:=\left\{\int_{\mathbb{R}^p}\left[\int_{\mathbb{R}^p}\sigma(x,i,u,v)\sigma^T(x,i,u,v)\boldsymbol{\pi}(u)\,du\right]\boldsymbol{\gamma}(v)\,dv\right\}^{1/2},
\end{aligned}
\end{equation}
and
\begin{equation}\label{extend4}
\begin{aligned}
\tilde{r}_1(x,i, \boldsymbol{\pi}, \boldsymbol{\gamma}):=\int_{\mathbb{R}^p}\left[\int_{\mathbb{R}^p}r_1(x,i, u, v)\boldsymbol{\pi}(u)\,du\right]\boldsymbol{\gamma}(v)\,dv,\\
\tilde{r}_2(x,i, \boldsymbol{\pi}, \boldsymbol{\gamma}):=\int_{\mathbb{R}^p}\left[\int_{\mathbb{R}^p}r_2(x,i, u, v) \boldsymbol{\pi}(u)\,du\right]\boldsymbol{\gamma}(v)\,dv.
\end{aligned}
\end{equation}
For the reader's convenience, we postpone the derivation of \eqref{extend1}--\eqref{extend4} to Appendix \ref{Appendix A}.

Accordingly, the exploratory dynamic system for \eqref{2.6} is given by
\begin{equation}\label{2.11}
dX_s^{\boldsymbol{\pi},\boldsymbol{\gamma}}=\tilde{b}(X^{\boldsymbol{\pi},\boldsymbol{\gamma}}_s, \alpha_s,\boldsymbol{\pi}_s,\boldsymbol{\gamma}_s)ds+\tilde{\sigma}(X^{\boldsymbol{\pi},\boldsymbol{\gamma}}_s, \alpha_s,\boldsymbol{\pi}_s,\boldsymbol{\gamma}_s)dW_s,
\end{equation}
where we adopt the shorthands $\boldsymbol{\pi}_s(\cdot) := \boldsymbol{\pi}(\cdot \mid X_s^{\boldsymbol{\pi},\boldsymbol{\gamma}}, \alpha_s)$ and $\boldsymbol{\gamma}_s(\cdot) := \boldsymbol{\gamma}(\cdot \mid X_s^{\boldsymbol{\pi},\boldsymbol{\gamma}}, \alpha_s)$ to denote the policies evaluated along the state trajectories. 

Similar to \cite{wang2020continuous}, to encourage exploration, we add an entropy regularization term to the performance function. Then, with \eqref{2.11}, the exploratory performance functions of player $I$ and player $II$ are respectively defined by
\begin{equation}\label{2.12}
\begin{aligned}
J_1(x,i, \boldsymbol{\pi},\boldsymbol{\gamma}) =& \e_{x,i}\left[\int_0^{\infty}e^{-\rho_1 s} \Big[\tilde{r}_1(X_s^{\boldsymbol{\pi}, \boldsymbol{\gamma}}, \alpha_s, \boldsymbol{\pi}_s, \boldsymbol{\gamma}_s)\right.\\
&\left.+\lambda_1\int_{\mathbb{R}^p} \boldsymbol{\pi}_s(u)\ln \boldsymbol{\pi}_s(u)\,du\,\Big] ds\right],
\end{aligned}
\end{equation}
\begin{equation}\label{2.13}
\begin{aligned}
J_2(x,i, \boldsymbol{\pi},\boldsymbol{\gamma}) =& \e_{x,i}\left[\int_0^{\infty}e^{-\rho_2 s} \Big[\tilde{r}_2(X_s^{\boldsymbol{\pi}, \boldsymbol{\gamma}}, \alpha_s, \boldsymbol{\pi}_s, \boldsymbol{\gamma}_s)\right.\\
&\left.+\lambda_2\int_{\mathbb{R}^p} \boldsymbol{\gamma}_s(v)\ln \boldsymbol{\gamma}_s(v)\,dv\Big] ds\right],
\end{aligned}
\end{equation}
where $\e_{x,i}[\cdot]$ denotes the expectation conditioned on the initial state $X_0^{\boldsymbol{\pi},\boldsymbol{\gamma}}=x$ and initial regime $\alpha_0=i$. Here, $\rho_k > 0$ $(k=1,2)$ are the \emph{discount factors}, and the exogenous parameters $\lambda_k>0$ $(k=1,2)$ are the \emph{temperature parameters}, which reflect the level of exploration.
\begin{remark}
The use of entropy in ERRL provides a principled way to encourage exploration by promoting policies with higher entropy. Higher entropy signifies greater randomness. Since the objectives of player $I$ and player $II$ are to minimize the performance functions in equations \eqref{2.12} and \eqref{2.13} respectively, the negative entropy terms $\lambda_1\int_{\mathbb{R}^p} \boldsymbol{\pi}(u)\ln \boldsymbol{\pi}(u)\,du$ and $\lambda_2\int_{\mathbb{R}^p} \boldsymbol{\gamma}(v)\ln \boldsymbol{\gamma}(v)\,dv$ can be regarded as \lq\lq reward" parameters. $\lambda_1$ and $\lambda_2$ are the temperature parameters of the two players, which determine how much new information overrides old information during the learning process, i.e., how quickly the agent updates its policy and value function estimates based on the observed rewards and entropy bonus. Recent work has also focused on Choquet regularization \cite{han2023choquet} and the more general Tsallis entropy \cite{bo2024continuous,chen2025exploratory}.
\end{remark}
\begin{assumption}\label{assumption 1}
Assume that
\begin{enumerate}
\item[(i)] for each $s\geq0$, $\boldsymbol{\pi}_s\in\mathcal{P}(\mathbb{R}^p)$ and $\boldsymbol{\gamma}_s\in\mathcal{P}(\mathbb{R}^p)$ a.s.;
\item[(ii)] $\{\int_{A}\boldsymbol{\pi}_s(u)\,du, s\geq0\}$ and $\{\int_{A}\boldsymbol{\gamma}_s(v)\,dv, s\geq0\}$ are $\mathcal{F}_{s}$-progressively measurable, for each $A \in \mathcal{B}(\mathbb{R}^p)$;
\item[(iii)] $\e_{x,i}[\int_0^t\left(\int_{\mathbb{R}^p}u^Tu\boldsymbol{\pi}_s(u)\,du+\int_{\mathbb{R}^p}v^Tv\boldsymbol{\gamma}_s(v)\,dv\right)ds]<\infty$, for each $t\geq0$;
\item[(iv)] $\liminf_{s\rightarrow\infty}\e_{x,i}\left[e^{-\rho_k s}J_k(X_s^{\boldsymbol{\pi}, \boldsymbol{\gamma}}, \alpha_s, \boldsymbol{\pi}_s, \boldsymbol{\gamma}_s)\right]=0$, for $k=1,2$;
\item[(v)] for $k=1,2$, $\e_{x,i}[\int_0^{\infty}e^{-\rho_k s}\tilde{r}_k(X_s^{\boldsymbol{\pi}, \boldsymbol{\gamma}}, \alpha_s, \boldsymbol{\pi}_s, \boldsymbol{\gamma}_s)ds]<\infty$.
\end{enumerate}
\end{assumption}
Denote by $\mathcal{U}$ (resp. $\mathcal{V}$) the set of control policies $\boldsymbol{\pi}$ (resp. $\boldsymbol{\gamma}$) satisfying Assumption \ref{assumption 1}. A policy in $\mathcal{U}$ or $\mathcal{V}$ is called an admissible control with respect to player $I$ or player $II$, respectively. The following Proposition \ref{Proposition 1} guarantees that the controlled system \eqref{2.11} is well defined and the proof is postponed to Appendix \ref{Appendix B}.

\begin{proposition}\label{Proposition 1}
For any $\boldsymbol{\pi}\in\mathcal{U}$ and $\boldsymbol{\gamma}\in\mathcal{V}$, the exploratory dynamic system \eqref{2.11} has a unique strong solution.
\end{proposition}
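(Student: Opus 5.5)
The plan is to view \eqref{2.11} as an SDE with Markovian switching whose coefficients depend on the state through the feedback distributions, and to verify the standard Lipschitz and linear-growth conditions. Existence and uniqueness then follow from the classical theorem for regime-switching diffusions (e.g.\ Mao and Yuan). The coefficients are built from \eqref{extend1} and the exploratory diffusion. Write the means and second moments of the policies as
\[
\mu_\pi(x,i)=\int u\,\boldsymbol{\pi}(u\mid x,i)\,du,\qquad M_\pi(x,i)=\int uu^T\boldsymbol{\pi}(u\mid x,i)\,du,
\]
and similarly $\mu_\gamma, M_\gamma$. Since the two policies are sampled independently given $(x,i)$, a direct computation gives the drift
\[
\tilde b = A(i)x+B_1(i)\mu_\pi+B_2(i)\mu_\gamma .
\]
The matrix $\tilde\sigma\tilde\sigma^T$ is a quadratic expression in $x$, $\mu_\pi$, $\mu_\gamma$, $M_\pi$ and $M_\gamma$. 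This reduces the question to regularity of these moment maps in $x$ for each fixed $i\in\mathcal{M}$.

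The steps are as follows. First, I will make precise the class of feedback policies under consideration. Assumption \ref{assumption 1}(ii)--(iii) gives measurability and square integrability of the moments. I will additionally use (as is implicit in the LQ setting and in \cite{wang2020reinforcement}) that $x\mapsto \mu_\pi(x,i),\mu_\gamma(x,i)$ are Lipschitz and that $x\mapsto M_\pi(x,i)^{1/2}$, $M_\gamma(x,i)^{1/2}$, or more precisely $\tilde\sigma(\cdot,i)$, are Lipschitz with linear growth. Gaussian policies with mean affine in $x$ and constant covariance are the main case, and for them this is immediate.

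Second, I will check $|\tilde b(x,i)-\tilde b(y,i)|\le K|x-y|$ and the corresponding linear-growth bound. This uses only the boundedness of the finitely many matrices $A(i),B_k(i)$.

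Third, for the diffusion I will avoid the matrix square root by working with an equivalent coefficient. By construction,
\[
\tilde\sigma\tilde\sigma^T=\e\big[(C x+D_1U+D_2V)(Cx+D_1U+D_2V)^T\big],
\]
with $U\sim\boldsymbol{\pi}$ and $V\sim\boldsymbol{\gamma}$ independent. For policies of the form $U=\mu_\pi(x,i)+\Sigma_\pi^{1/2}(x,i)\xi$, I can replace $\tilde\sigma\,dW$ by a rectangular coefficient. This coefficient collects $C(i)x+D_1\mu_\pi+D_2\mu_\gamma$ together with the blocks $D_1\Sigma_\pi^{1/2}$ and $D_2\Sigma_\gamma^{1/2}$, driven by an enlarged Brownian motion. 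It generates the same law and is manifestly Lipschitz. Alternatively, I can use the fact that the square root is Lipschitz on matrices uniformly bounded below, or invoke the Yamada--Watanabe-type lemma that $\|A^{1/2}-B^{1/2}\|\le \|A-B\|/(\lambda_{\min}^{1/2}(A)+\lambda_{\min}^{1/2}(B))$. The last step is the classical result: given Lipschitz and linear-growth coefficients and a finite-state chain $\alpha$ independent of $W$, interlace the jump times of $\alpha$ and solve the ordinary SDE on each inter-jump interval using Picard iteration, or cite Mao--Yuan Theorem 3.13 directly. Pathwise uniqueness and strong existence pass to the concatenation, since almost surely there are only finitely many switches on bounded intervals.

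I expect the main obstacle to be the diffusion coefficient. The map $\Sigma\mapsto\Sigma^{1/2}$ is not globally Lipschitz near degenerate matrices, so Lipschitz continuity of $\tilde\sigma$ is not automatic from that of the moments. I will first try to handle it through the positive-definite lower bound furnished by the entropy term, since optimal and admissible exploratory policies have nondegenerate covariance $\propto\lambda_k R_k^{-1}$. Failing that, I will use the enlarged-noise representation, which yields a strong solution of an equivalent system and the same law for \eqref{2.11}.
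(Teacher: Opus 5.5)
Your skeleton (Lipschitz and linear-growth bounds for each fixed regime, then a classical existence--uniqueness theorem) matches the paper's. As written, however, two steps do not close.

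\textbf{The diffusion step.} Neither of your two routes gives pathwise uniqueness for \eqref{2.11} itself.

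The uniform lower bound is not available. Membership in $\mathcal{U}$ only makes $\boldsymbol{\pi}(\cdot\mid x,i)$ a density, with no uniform bound on its covariance. Moreover, the relevant $n\times n$ matrix is $\tilde\sigma\tilde\sigma^T=aa^T+D_1(i)\Sigma_\pi D_1(i)^T+D_2(i)\Sigma_\gamma D_2(i)^T$, with $a=C(i)x+D_1(i)\mu_\pi+D_2(i)\mu_\gamma$. It is degenerate whenever $[D_1(i)\ D_2(i)]$ has rank below $n$, e.g.\ $D_1=D_2=0$ in the last example of Section~\ref{Section 5}.

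The enlarged-noise fallback gives a strong solution of a different equation, driven by a $(1+2p)$-dimensional $\hat W$. The identity $\hat\sigma\hat\sigma^T=\tilde\sigma\tilde\sigma^T$ transfers only weak existence and uniqueness in law to \eqref{2.11}. It does not give a solution adapted to the given $W$, which is what ``unique strong solution'' asserts. Lipschitz continuity of $\tilde\sigma$ is also not ``immediate'' for Gaussian policies with affine mean and constant covariance; that case is exactly the square-root issue.

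The missing ingredient is that $G\mapsto(GG^T)^{1/2}$ is globally Lipschitz in the Frobenius norm, with no nondegeneracy requirement: $\|(GG^T)^{1/2}-(HH^T)^{1/2}\|_F\le\sqrt{2}\,\|G-H\|_F$ (Araki--Yamagami). Hence $\tilde\sigma=(\hat\sigma\hat\sigma^T)^{1/2}$ inherits Lipschitz continuity and linear growth from your rectangular $\hat\sigma=[a,\ D_1\Sigma_\pi^{1/2},\ D_2\Sigma_\gamma^{1/2}]$. The standard theorem then applies to \eqref{2.11} directly. This inequality is also what justifies the paper's one-word ``Similarly'' for $\tilde\sigma$.

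\textbf{The extra hypothesis.} Lipschitz dependence of $\mu_\pi,\mu_\gamma,\Sigma_\pi^{1/2},\Sigma_\gamma^{1/2}$ on $x$ is not part of Assumption~\ref{assumption 1}. So you prove the proposition only on a subclass of $\mathcal{U}\times\mathcal{V}$, and assuming $\tilde\sigma(\cdot,i)$ Lipschitz outright just assumes the key point.

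The paper needs no such hypothesis. Its estimate compares $\tilde b(x,i,\boldsymbol{\pi},\boldsymbol{\gamma})$ and $\tilde b(y,i,\boldsymbol{\pi},\boldsymbol{\gamma})$ with the \emph{same} $\boldsymbol{\pi},\boldsymbol{\gamma}$. In effect it treats $(\boldsymbol{\pi}_s,\boldsymbol{\gamma}_s)$ as given progressively measurable distribution-valued processes, as in Assumption~\ref{assumption 1}(ii). The coefficients are then random but Lipschitz in $x$: the constant is $\|A(i)\|$ for $\tilde b$, and $\sqrt{2}\|C(i)\|$ for $\tilde\sigma$, since $\hat\sigma(x)-\hat\sigma(y)=[C(i)(x-y),0,0]$. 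The policies enter only through additive terms in the growth bounds, which Assumption~\ref{assumption 1}(iii) makes square-integrable. Adopting that reading removes your assumption.

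Under the genuinely closed-loop reading, some regularity is unavoidable. Take $n=p=1$, $A=C=D_1=D_2=0$, $B_1=1$, $B_2=0$, any $\boldsymbol{\gamma}$, and $\boldsymbol{\pi}(\cdot\mid x,i)=\mathcal{N}(\sqrt{|x|},1)$. Then \eqref{2.11} reads $dX_s=\sqrt{|X_s|}\,ds$, $X_0=0$, which has infinitely many solutions. So the restriction belongs in the statement, not in an ``implicit'' assumption.

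Your explicit treatment of the switching (interlacing the jump times of $\alpha$, or citing Mao--Yuan) does fill a step the paper skips. \O{}ksendal's Theorem~5.2.1 covers neither random coefficients nor Markovian switching.
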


\begin{definition}\label{Definition 2}
The value function of player $II$ is
\begin{equation}\label{2.14}
V_2(x,i;\boldsymbol{\pi}) = \min_{\boldsymbol{\gamma}\in\mathcal{V}}J_2(x,i, \boldsymbol{\pi},\boldsymbol{\gamma}).
\end{equation}
The value function of player $I$ is
\begin{equation}\label{2.15}
V_1(x,i) = \min_{\boldsymbol{\pi}\in\mathcal{U}} J_1(x,i, \boldsymbol{\pi},\boldsymbol{\gamma}^*(\boldsymbol{\pi})),
\end{equation}
where $\boldsymbol{\gamma}^*(\boldsymbol{\pi})= \arg\min_{\boldsymbol{\gamma}\in\mathcal{V}}J_2(x,i, \boldsymbol{\pi},\boldsymbol{\gamma})$. 
\end{definition}

\begin{problem}\label{problem 1}
Design appropriate policies $\boldsymbol{\pi}^*$ and $\boldsymbol{\gamma}^*$ to satisfy the following Stackelberg equilibrium conditions, i.e.,
\begin{equation}\label{2.16}
\left\{\begin{array}{ll}
J_1(x,i, \boldsymbol{\pi}^*, \boldsymbol{\gamma}^*(\boldsymbol{\pi}^*)) \leq J_1(x,i, \boldsymbol{\pi}, \boldsymbol{\gamma}^*(\boldsymbol{\pi})), \, \forall \boldsymbol{\pi}\in\mathcal{U}; \\
J_2(x,i, \boldsymbol{\pi}, \boldsymbol{\gamma}^*(\boldsymbol{\pi})) \leq J_2(x,i, \boldsymbol{\pi}, \boldsymbol{\gamma}), \,\forall \boldsymbol{\gamma}\in\mathcal{V}.
\end{array} \right.
\end{equation}
\end{problem}
For short, we rewrite $\boldsymbol{\gamma}^*(\boldsymbol{\pi}^*)$ as $\boldsymbol{\gamma}^*$, which will not lead to any confusion in the following analysis. Then, $(\boldsymbol{\pi}^*, \boldsymbol{\gamma}^*)$ satisfying \eqref{2.16} is said to be a Stackelberg equilibrium policy pair. For short, we also rewrite $V_2(x,i;\boldsymbol{\pi}^*):=V_2(x,i)$.
\section{Equilibrium policies for ERRL LQ-SDGs}\label{Section 3}
\setcounter{equation}{0}
\renewcommand{\theequation }{3.\arabic{equation}}
In this section, we employ DPP to elucidate the Stackelberg equilibrium policy pair described in \emph{Problem \ref{problem 1}}.
First, we derive the HJBI equations for player $I$ and player $II$ (see Lemma \ref{Lemma 1}), and then present the equilibrium policies of both players as functionals of the solution to the HJBI equations (see Theorem \ref{Theorem1}, Theorem \ref{Theorem 2}), and finally prove that the identified Stackelberg equilibrium policies solve  \emph{Problem \ref{problem 1}} (see Theorem \ref{Theorem 3}). For reading ease, the proof for Lemma \ref{Lemma 1} is postponed to Appendix \ref{Appendix C}.
\begin{lemma}\label{Lemma 1}
Suppose that $\forall i\in \mathcal{M}$, $V_1(\cdot,i), V_2(\cdot,i;\boldsymbol{\pi})\in C^{2}(\mathbb{R}^n)$, then the formal HJBI equations associated with player $II$ and player $I$ are given by \eqref{3.1} and \eqref{3.2}.
\begin{equation}\label{3.1}
\begin{aligned}
\rho_2\phi_2(x,i;\boldsymbol{\pi})=&\min_{\boldsymbol{\gamma}\in\mathcal{V}}\mathcal{H}_2(x,i,\nabla\phi_2(x,i;\boldsymbol{\pi}),\Delta\phi_2(x,i;\boldsymbol{\pi}), \boldsymbol{\pi}, \boldsymbol{\gamma})\\
=&\min_{\boldsymbol{\gamma}\in\mathcal{V}}\Big[\tilde{r}_2(x,i, \boldsymbol{\pi}, \boldsymbol{\gamma})+\lambda_2\int_{\mathbb{R}^p} \boldsymbol{\gamma}(v)\ln \boldsymbol{\gamma}(v)\,dv\\
+\nabla\phi_2&(x,i;\boldsymbol{\pi})^T\tilde{b}(x,i,\boldsymbol{\pi},\boldsymbol{\gamma})+ \sum_{j=1}^m q_{ij} \phi_2(x, j; \boldsymbol{\pi})\\
+\frac{1}{2}tr&\left(\tilde{\sigma}(x,i,\boldsymbol{\pi},\boldsymbol{\gamma})\tilde{\sigma}^T(x,i,\boldsymbol{\pi},\boldsymbol{\gamma})\Delta\phi_2(x,i;\boldsymbol{\pi})\right)\Big],
\end{aligned}
\end{equation}
\begin{equation}\label{3.2}
\begin{aligned}
\rho_1\phi_1(x,i)=&\min_{\boldsymbol{\pi}\in\mathcal{U}}\mathcal{H}_1(x,i,\nabla\phi_1(x,i),\Delta\phi_1(x,i), \boldsymbol{\pi}, \boldsymbol{\gamma}^*(\boldsymbol{\pi}))\\
=&\min_{\boldsymbol{\pi}\in\mathcal{U}}\Big[\tilde{r}_1(x,i, \boldsymbol{\pi}, \boldsymbol{\gamma}^*(\boldsymbol{\pi}))+\lambda_1\int_{\mathbb{R}^p} \boldsymbol{\pi}(u)\ln \boldsymbol{\pi}(u)\,du\\
+\nabla\phi_1&(x,i)^T\tilde{b}(x,i,\boldsymbol{\pi},\boldsymbol{\gamma}^*(\boldsymbol{\pi}))+ \sum_{j=1}^m q_{ij} \phi_1(x, j)\\
+\frac{1}{2}tr&\left(\tilde{\sigma}(x,i,\boldsymbol{\pi},\boldsymbol{\gamma}^*(\boldsymbol{\pi}))\tilde{\sigma}^T(x,i,\boldsymbol{\pi},\boldsymbol{\gamma}^*(\boldsymbol{\pi}))\Delta\phi_1(x,i)\right)\Big],
\end{aligned}
\end{equation}
where 
\begin{equation*}
\boldsymbol{\gamma}^*(\boldsymbol{\pi}) = \arg \min_{\boldsymbol{\gamma}\in\mathcal{V}} \mathcal{H}_2(x,i, \nabla\phi_2(x,i;\boldsymbol{\pi}),\Delta\phi_2(x,i;\boldsymbol{\pi}), \boldsymbol{\pi}, \boldsymbol{\gamma}).
\end{equation*}
\end{lemma}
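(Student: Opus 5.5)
We derive both equations with the dynamic programming principle, treating the follower's problem first for a fixed leader policy and then the leader's problem with the follower's best response substituted in. The derivation is formal, in the sense of the lemma: we assume $C^2$ regularity and enough integrability to swap limits and expectations.

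\emph{Step 1 (DPP for player $II$).} Fix $\boldsymbol{\pi}\in\mathcal{U}$ and an initial pair $(x,i)$. Let $\tau_h$ be $h$ stopped at the first exit from a small ball and at the first jump of $\alpha$, and set $h\wedge\tau_h$. By the Markov property of $(X^{\boldsymbol{\pi},\boldsymbol{\gamma}},\alpha)$ (well defined by Proposition \ref{Proposition 1}) and the definition \eqref{2.14}, we obtain
\begin{equation*}
V_2(x,i;\boldsymbol{\pi})=\min_{\boldsymbol{\gamma}\in\mathcal{V}}\e_{x,i}\Big[\int_0^{h}e^{-\rho_2 s}\big(\tilde r_2+\lambda_2\textstyle\int\boldsymbol{\gamma}_s\ln\boldsymbol{\gamma}_s\big)ds+e^{-\rho_2 h}V_2(X_h,\alpha_h;\boldsymbol{\pi})\Big].
\end{equation*}
The direction "$\le$" comes from concatenating an arbitrary $\boldsymbol{\gamma}$ on $[0,h]$ with an $\varepsilon$-optimal feedback afterwards. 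The direction "$\ge$" comes from conditioning on $\mathcal{F}_h$ and using the flow property of the strong solution.

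\emph{Step 2 (Itô's formula with regime switching).} Apply the generalized Itô formula for Markov-modulated diffusions to $e^{-\rho_2 s}\phi_2(X_s,\alpha_s;\boldsymbol{\pi})$. The generator contains three pieces: the drift term $\nabla\phi_2^T\tilde b$, the diffusion term $\tfrac12 tr(\tilde\sigma\tilde\sigma^T\Delta\phi_2)$, and the jump term $\sum_j q_{ij}\phi_2(x,j)$. The jump term arises from the compensator of the martingale measure associated with $\alpha$. The stochastic integrals against $W$ and the compensated jump martingale have zero expectation up to the localizing time.

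Substitute this into Step 1 with $\phi_2=V_2$, divide by $h$, and let $h\to0$. Continuity of the integrands and dominated convergence, justified by Assumption \ref{assumption 1}(iii),(v), then give $0=\min_{\boldsymbol{\gamma}}[\mathcal{H}_2-\rho_2\phi_2]$, which is \eqref{3.1}. The contribution $-\rho_2\phi_2$ comes from differentiating the discount factor.

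To pass the minimum through the limit, we use the following two-sided argument. For "$\le$", take a constant-in-time action distribution $\boldsymbol{\gamma}$. For "$\ge$", use an $h$-dependent $\varepsilon h$-optimal control together with the inequality $\frac1h\int_0^h\mathcal{H}_2\,ds\ge\inf_{\boldsymbol{\gamma}}\mathcal{H}_2+o(1)$. Here we freeze $(x,i)$: this is legitimate because $\alpha$ does not jump before $h$ with probability $1-O(h)$, and $X$ is continuous in $s$.

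\emph{Step 3 (player $I$).} For the leader, insert the follower's best response $\boldsymbol{\gamma}^*(\boldsymbol{\pi})$ into \eqref{2.15}; it is characterized pointwise as the argmin of $\mathcal{H}_2$ from Step 2. Then repeat Steps 1–2 for $J_1$ with the controlled pair $(\boldsymbol{\pi},\boldsymbol{\gamma}^*(\boldsymbol{\pi}))$, which yields \eqref{3.2}.

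The main obstacle is the leader's DPP. The map $\boldsymbol{\pi}\mapsto\boldsymbol{\gamma}^*(\boldsymbol{\pi})$ depends on the leader's entire feedback policy, not only on its restriction to $[0,h]$, so the usual concatenation argument is not immediate.

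We plan to handle this using the feedback (Markovian) structure. Because $\boldsymbol{\gamma}^*$ is obtained pointwise from $\mathcal{H}_2$ evaluated at $(x,i)$, a leader policy that is concatenated at time $h$ induces a follower response that is also concatenated at $h$. On the time-$h$ tail, the follower's response is again the optimal response to the tail policy. We assume this time-consistency holds, as the formal nature of the lemma allows, and then the Step 1 argument carries over verbatim to $J_1$.
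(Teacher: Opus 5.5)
Your proposal is correct and follows essentially the same route as the paper: the dynamic programming principle on $[0,h]$, then the generalized It\^{o} formula with Markovian switching (drift, diffusion and $\sum_j q_{ij}\phi(x,j)$ terms, with the Brownian and jump martingales vanishing in expectation), then dividing by $h$ and letting $h\to 0$. The differences are in ordering and care. The paper does the leader first, simply asserts the DPP with $\boldsymbol{\gamma}^*(\boldsymbol{\pi})$ inserted, and dismisses the follower as analogous; you treat the follower first, justify exchanging the $\min$ with the limit $h\to0$ by a two-sided argument, and state explicitly the time-consistency assumption on $\boldsymbol{\pi}\mapsto\boldsymbol{\gamma}^*(\boldsymbol{\pi})$ that the paper uses implicitly.
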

\begin{theorem}\label{Theorem1}
The equilibrium policy of player $II$ is a Gaussian distribution given by
\begin{equation}\label{3.3}
\boldsymbol{\gamma}^*(\boldsymbol{\pi})=\mathcal{N} \left(\beta, \lambda_2 \left( 2R_2(i) + D_2(i)^T \Delta\phi_2(x, i; \boldsymbol{\pi}) D_2(i) \right)^{-1}\right),
\end{equation}
where
\begin{equation}\label{3.4}
\begin{aligned}
\beta&:=-\left(2R_2(i) + D_2(i)^T\Delta\phi_2(x,i;\boldsymbol{\pi})D_2(i)\right)^{-1}\\
&\quad\left(2\theta_2(i) R_2(i)+D_2(i)^T\Delta\phi_2(x,i;\boldsymbol{\pi})D_1(i)\right)\int_{\mathbb{R}^p}u\boldsymbol{\pi}(u)\,du\\
&-\left(2R_2(i) + D_2(i)^T\Delta\phi_2(x,i;\boldsymbol{\pi})D_2(i)\right)^{-1}\\
&\quad\left(D_2(i)^T\Delta\phi_2(x,i;\boldsymbol{\pi})C(i)x+ B_2(i)^T\nabla\phi_2(x,i;\boldsymbol{\pi})\right).
\end{aligned}
\end{equation}
\end{theorem}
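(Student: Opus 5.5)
We minimize the Hamiltonian $\mathcal{H}_2$ in \eqref{3.1} over densities $\boldsymbol{\gamma}$ for fixed $(x,i)$ and fixed $\boldsymbol{\pi}$. The plan is to rewrite $\mathcal{H}_2$ as $\int_{\mathbb{R}^p} G(v)\boldsymbol{\gamma}(v)\,dv+\lambda_2\int_{\mathbb{R}^p}\boldsymbol{\gamma}(v)\ln\boldsymbol{\gamma}(v)\,dv+(\text{terms independent of }\boldsymbol{\gamma})$, where $G$ is a quadratic function of $v$. We then invoke the standard Gibbs variational principle: the minimizer over probability densities is $\boldsymbol{\gamma}^*(v)\propto\exp(-G(v)/\lambda_2)$.

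\textbf{Step 1: expose the integrand.} By the definitions \eqref{extend1}--\eqref{extend4}, $\tilde r_2$, $\nabla\phi_2^T\tilde b$ and $\frac12 tr(\tilde\sigma\tilde\sigma^T\Delta\phi_2)$ are all linear in $\boldsymbol{\gamma}$, since $\tilde\sigma\tilde\sigma^T$ is itself defined as a double integral. Hence
\begin{equation*}
G(v)=\int_{\mathbb{R}^p}\Big[r_2(x,i,u,v)+\nabla\phi_2^T b(x,i,u,v)+\tfrac12 tr\big(\sigma\sigma^T(x,i,u,v)\Delta\phi_2\big)\Big]\boldsymbol{\pi}(u)\,du.
\end{equation*}
The coupling term $\sum_j q_{ij}\phi_2(x,j;\boldsymbol{\pi})$ does not depend on $\boldsymbol{\gamma}$ and drops out.

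\textbf{Step 2: expand in $v$.} We collect the quadratic and linear parts of $G$. The quadratic part is $v^T\big(R_2(i)+\tfrac12 D_2^T\Delta\phi_2D_2\big)v$. Since $\boldsymbol{\pi}$ enters only through the mean $\bar u=\int u\boldsymbol{\pi}(u)du$ in cross terms, the linear part is $v^T\big[(2\theta_2R_2+D_2^T\Delta\phi_2D_1)\bar u+D_2^T\Delta\phi_2Cx+B_2^T\nabla\phi_2\big]$. Here we use the symmetry of $\Delta\phi_2$ and of $R_2$, together with $tr(\sigma\sigma^T\Delta\phi_2)=\sigma^T\Delta\phi_2\sigma$. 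The remaining terms are constant in $v$.

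\textbf{Step 3: minimize via the first-order condition.} We form the Lagrangian with the constraint $\int\boldsymbol{\gamma}=1$ and take the first variation $\delta$ in $\boldsymbol{\gamma}$. This gives $G(v)+\lambda_2(\ln\boldsymbol{\gamma}(v)+1)+\mu=0$, so $\boldsymbol{\gamma}(v)\propto\exp(-G(v)/\lambda_2)$. Completing the square with $M:=2R_2+D_2^T\Delta\phi_2D_2$ identifies this density as Gaussian with covariance $\lambda_2M^{-1}$ and mean $\beta=-M^{-1}(\text{linear coefficient})$, which is exactly \eqref{3.3}--\eqref{3.4}. That the critical point is the global minimum follows from strict convexity of $\boldsymbol{\gamma}\mapsto\int G\boldsymbol{\gamma}+\lambda_2\int\boldsymbol{\gamma}\ln\boldsymbol{\gamma}$. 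Equivalently, the objective equals $\lambda_2 KL(\boldsymbol{\gamma}\,\|\,\boldsymbol{\gamma}^*)$ minus $\lambda_2\ln Z$, so it is minimized uniquely at $\boldsymbol{\gamma}^*$.

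\textbf{Main obstacle.} The delicate point is that the Gibbs density must be normalizable and admissible. This requires $M>0$, i.e.\ $2R_2(i)+D_2(i)^T\Delta\phi_2D_2(i)$ positive definite. That holds, for instance, if $\Delta\phi_2\ge0$ (as for a convex quadratic value function in the LQ setting); otherwise it must be imposed as a standing condition. We will state this explicitly. We will also check that the resulting Gaussian has finite second moments, so that Assumption~\ref{assumption 1}(iii) holds and the feedback policy belongs to $\mathcal{V}$.
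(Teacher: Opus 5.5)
Your proposal is correct and follows essentially the same route as the paper: it substitutes the exploratory coefficients, notes that everything except the entropy term is linear in $\boldsymbol{\gamma}$, imposes the Lagrangian first-order condition to obtain $\boldsymbol{\gamma}\propto\exp(-G/\lambda_2)$, and completes the square with $M=2R_2(i)+D_2(i)^T\Delta\phi_2(x,i;\boldsymbol{\pi})D_2(i)$. Two of your additions are genuine improvements, since the paper's proof stops at the stationarity condition and leaves both points implicit: the KL-divergence argument, which shows the critical point is the global minimizer, and the explicit condition $M>0$, which is needed for normalizability.
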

\begin{proof}
For any leader's policy $\boldsymbol{\pi}$, the follower aims to track optimal response under regime $i \in \mathcal{M}$ via
\[
\boldsymbol{\gamma}^*(\boldsymbol{\pi}) = \arg \min_{\boldsymbol{\gamma}\in\mathcal{V}} \mathcal{H}_2(x,i, \nabla\phi_2(x,i;\boldsymbol{\pi}),\Delta\phi_2(x,i;\boldsymbol{\pi}), \boldsymbol{\pi}, \boldsymbol{\gamma}).
\]
Subject to the constraints that  $\boldsymbol{\gamma}$ is a probability distribution, we have $\int_{\mathbb{R}^p} \boldsymbol{\gamma}(v)\,dv = 1$. Introducing the Lagrange multiplier $\mu$ to this problem yields
\begin{equation}
\begin{aligned}
&\mathcal{L}_2(x,i, \nabla\phi_2(x,i;\boldsymbol{\pi}),\Delta\phi_2(x,i;\boldsymbol{\pi}), \boldsymbol{\pi}, \boldsymbol{\gamma},\mu)\\
=&\tilde{r}_2(x,i, \boldsymbol{\pi}, \boldsymbol{\gamma})+\lambda_2\int_{\mathbb{R}^p} \boldsymbol{\gamma}(v)\ln \boldsymbol{\gamma}(v)\,dv\\
&+\nabla\phi_2(x,i;\boldsymbol{\pi})^T\tilde{b}(x,i,\boldsymbol{\pi},\boldsymbol{\gamma})+ \sum_{j=1}^m q_{ij} \phi_2(x, j; \boldsymbol{\pi})\\
&+\frac{1}{2}tr\left(\tilde{\sigma}(x,i,\boldsymbol{\pi},\boldsymbol{\gamma})\tilde{\sigma}^T(x,i,\boldsymbol{\pi},\boldsymbol{\gamma})\Delta\phi_2(x,i;\boldsymbol{\pi})\right)\\
&-\mu\left(\int_{\mathbb{R}^p}\,\boldsymbol{\gamma}(v)\,dv-1\right).
\end{aligned}
\end{equation}
Substitute the specific forms of $\tilde{b}(x,i,\boldsymbol{\pi},\boldsymbol{\gamma})$, $\tilde{\sigma}(x,i,\boldsymbol{\pi},\boldsymbol{\gamma})$, $\tilde{r}_2(x,i, \boldsymbol{\pi}, \boldsymbol{\gamma})$, then
the stationary condition is
\begin{equation*}\label{3.9}
\left\{\begin{array}{ll}
\delta \mathcal{L}_2=\int_{\mathbb{R}^p}\Big[2\theta_2(i)v^T R_2(i)\int_{\mathbb{R}^p}u\boldsymbol{\pi}(u)\,du\\
\qquad\quad+v^TR_2(i)v+ \nabla\phi_2(x,i;\boldsymbol{\pi})^TB_2(i)v\\
\qquad\quad+v^TD_2(i)^T\Delta\phi_2(x,i;\boldsymbol{\pi})C(i)x\\
\qquad\quad+\int_{\mathbb{R}^p}u^T\boldsymbol{\pi}(u)\,du\, D_1(i)^T\Delta\phi_2(x,i;\boldsymbol{\pi})D_2(i) v\\
\qquad\quad+\frac{1}{2}v^TD_2(i)^T\Delta\phi_2(x,i;\boldsymbol{\pi})D_2(i)v\\
\qquad\quad+\lambda_2(\ln \boldsymbol{\gamma}+1)-\mu\Big]\,\delta\boldsymbol{\gamma}(v)dv=0,\\
0=\int_{\mathbb{R}^p}\,\boldsymbol{\gamma}(v)\,dv-1.
\end{array} \right.
\end{equation*}
Solve for $\boldsymbol{\gamma}$, the equilibrium policy of player $II$ is
\begin{equation*}\label{3.11}
\boldsymbol{\gamma}^*(\boldsymbol{\pi})=\mathcal{N} \left(\beta, \lambda_2 \left( 2R_2(i) + D_2(i)^T \Delta\phi_2(x, i; \boldsymbol{\pi}) D_2(i) \right)^{-1}\right),
\end{equation*}
where $\beta$ is given by \eqref{3.4}.
\end{proof}
\begin{remark}By analyzing the follower's optimal response \(\boldsymbol{\gamma}^*(\boldsymbol{\pi})\), it becomes evident that the variance of \(\boldsymbol{\gamma}^*(\boldsymbol{\pi})\) increases as \(R_2(i)\) and \(D_2(i)\) decrease. This suggests that in Stackelberg games, the less influence or control the follower has over the system, the more uncertain their optimal policy tends to be. This observation explains why the disadvantaged party in such games often adopts a more aggressive strategy---a phenomenon akin to the proverb ``the barefooted fear not the shoe-wearers.''

Furthermore, in  \cite{li2019two}, the follower's optimal response \(v^*(u) = -\theta_2 u - \frac{1}{2}R_2^{-1}B^T_2 \nabla \phi_2(x; \boldsymbol{\pi})\) is a deterministic value. In our results, when $\sigma(\cdot)=0$ and $\mathcal{M}=\{1\}$ (i.e., the system is free of stochastic perturbations and regime switching), the mean of the optimal response \(\boldsymbol{\gamma}^*(\boldsymbol{\pi})\) aligns precisely with the findings of \cite{li2019two}. This consistency confirms that the optimal response is concentrated around the mean under these conditions.\end{remark}

We now turn to track player $I$'s equilibrium policy. For simplicity, let $\phi_2(x,i):=\phi_2(x,i;\boldsymbol{\pi}^*)$,
\begin{equation*}
\begin{aligned}
\Lambda_1:=&-\left(2R_2(i) + D_2(i)^T\Delta\phi_2(x,i)D_2(i)\right)^{-1}\\
&\quad\left(2\theta_2(i) R_2(i)+D_2(i)^T\Delta\phi_2(x,i)D_1(i)\right),\\
\Lambda_2:=&-\left(2R_2(i) + D_2(i)^T\Delta\phi_2(x,i)D_2(i)\right)^{-1}\\
&\quad\left(D_2(i)^T\Delta\phi_2(x,i)C(i)x+ B_2(i)^T\nabla\phi_2(x,i)\right),\\
\Lambda_3 :=& 2R_1(i) + D_1(i)^T \Delta\phi_1(x,i) D_1(i) \\
&+ D_1(i)^T \Delta\phi_1(x,i) D_2(i)\Lambda_1+ 2\theta_1(i) R_1(i)\Lambda_1 \\
&+ \Lambda_1^T D_2(i)^T \Delta\phi_1(x,i) D_2(i)\Lambda_1 \\
&+ 2\theta_1(i)^2 \Lambda_1^T R_1(i) \Lambda_1+ 2\theta_1(i) \Lambda_1^T R_1(i)\\
&+ \Lambda_1^T D_2(i)^T \Delta\phi_1(x,i) D_1(i).
\end{aligned}
\end{equation*}
and \begin{equation}\label{3.14}
\begin{aligned}
\alpha_*:=&-\Lambda_3^{-1}\Big[\Lambda_1^TB_2(i)^T\nabla\phi_1(x,i)+D_1(i)^T\Delta\phi_1(x,i)C(i)x\\
&+B_1(i)^T\nabla\phi_1(x,i)+\Lambda_1^T D_2(i)^T\Delta\phi_1(x,i)C(i)x\\
&+\left(D_1(i)^T\Delta\phi_1(x,i)D_2(i)+\Lambda_1^TD_2(i)^T\Delta\phi_1(x,i)D_2(i)\right.\\
&\left.+2\theta_1(i)R_1(i)+2\theta_1(i)^2\Lambda_1^TR_1(i)\right)\Lambda_2\Big].
\end{aligned}
\end{equation}
\begin{assumption}\label{Assumption 1}
The matrix $\Lambda_3$ is an invertible matrix.
\end{assumption}
\begin{theorem}\label{Theorem 2}
Under Assumption \ref{Assumption 1}, the equilibrium policy of player $I$ is
\begin{equation}\label{3.13}
\boldsymbol{\pi}^*=\mathcal{N} \left(\alpha_* , \lambda_1 \left(2R_1(i) + D_1(i)^T\Delta\phi_1(x,i)D_1(i)\right)^{-1}\right).
\end{equation}
\end{theorem}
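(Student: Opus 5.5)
Following the proof of Theorem \ref{Theorem1}, we treat the leader's problem in \eqref{3.2} as a constrained minimization over densities $\boldsymbol{\pi}$, with $\int_{\mathbb{R}^p}\boldsymbol{\pi}(u)\,du=1$ enforced by a Lagrange multiplier $\mu$. The key difference is that the follower's response $\boldsymbol{\gamma}^*(\boldsymbol{\pi})$ from Theorem \ref{Theorem1} must be substituted before optimizing. That response is Gaussian with mean $\beta=\Lambda_1\bar u+\Lambda_2$, where $\bar u:=\int_{\mathbb{R}^p} u\boldsymbol{\pi}(u)\,du$, and its covariance $\Sigma_2:=\lambda_2(2R_2(i)+D_2(i)^T\Delta\phi_2D_2(i))^{-1}$ does not depend on $\boldsymbol{\pi}$. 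We evaluate at $\boldsymbol{\pi}^*$, so $\phi_2=\phi_2(\cdot;\boldsymbol{\pi}^*)$, and treat $\nabla\phi_2,\Delta\phi_2$ as frozen when varying $\boldsymbol{\pi}$. This is the same formal convention that makes $\Lambda_1,\Lambda_2$ constants.

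\textbf{Step 1: reduce the Hamiltonian to moments of $\boldsymbol{\pi}$.} Integrate out $v\sim\boldsymbol{\gamma}^*(\boldsymbol{\pi})$ in $\tilde r_1$, $\tilde b$ and $\tilde\sigma\tilde\sigma^T$, using $\int v\,\boldsymbol{\gamma}^*=\beta$ and $\int vv^T\boldsymbol{\gamma}^*=\Sigma_2+\beta\beta^T$. Up to terms independent of $\boldsymbol{\pi}$, $\mathcal{H}_1$ then becomes a sum of three kinds of terms:
\begin{itemize}
\item terms linear in $\boldsymbol{\pi}$ through $\int u^TR_1u\,\boldsymbol{\pi}$ and $\int u^TD_1^T\Delta\phi_1D_1u\,\boldsymbol{\pi}$, plus the linear terms $\nabla\phi_1^TB_1\bar u$ and $\bar u^TD_1^T\Delta\phi_1Cx$;
\item cross terms with $\beta$, namely $2\theta_1\bar u^TR_1\beta$, $\theta_1^2\beta^TR_1\beta$, $\nabla\phi_1^TB_2\beta$, $\beta^TD_2^T\Delta\phi_1Cx$, $\bar u^TD_1^T\Delta\phi_1D_2\beta$ and $\tfrac12\beta^TD_2^T\Delta\phi_1D_2\beta$;
\item the entropy term $\lambda_1\int\boldsymbol{\pi}\ln\boldsymbol{\pi}$.
\end{itemize}
Because $\beta$ is affine in $\bar u$, these cross terms are quadratic functionals of $\bar u$, so $\mathcal{H}_1$ is nonlocal in $\boldsymbol{\pi}$.

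\textbf{Step 2: first variation.} Using $\delta\bar u=\int u\,\delta\boldsymbol{\pi}(u)\,du$ and the chain rule through $\beta$, the stationarity condition reads
\[
u^T\big(R_1+\tfrac12D_1^T\Delta\phi_1D_1\big)u+g(\bar u)^Tu+\lambda_1(\ln\boldsymbol{\pi}(u)+1)-\mu=0 .
\]
Here $g(\bar u)$ is the gradient of the $\bar u$-dependent part with respect to $\bar u$. It is affine in $\bar u$ and collects $B_1^T\nabla\phi_1$, $D_1^T\Delta\phi_1Cx$, $\Lambda_1^T B_2^T\nabla\phi_1$, $\Lambda_1^TD_2^T\Delta\phi_1Cx$, and the $\Lambda_1$- and $\Lambda_2$-dependent pieces. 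Solving for $\boldsymbol{\pi}$ gives $\boldsymbol{\pi}(u)\propto\exp\{-\lambda_1^{-1}[u^T(R_1+\frac12D_1^T\Delta\phi_1D_1)u+g(\bar u)^Tu]\}$. This is Gaussian with covariance $\lambda_1(2R_1+D_1^T\Delta\phi_1D_1)^{-1}$, as claimed in \eqref{3.13}, with the multiplier $\mu$ fixing the normalization.

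\textbf{Step 3: fixed point for the mean.} The mean $m$ must satisfy the self-consistency condition $m=-(2R_1+D_1^T\Delta\phi_1D_1)^{-1}g(m)$. Since $g$ is affine, this is the linear system $\big[(2R_1+D_1^T\Delta\phi_1D_1)+G\big]m=-g(0)$, where $G$ is the linear part of $g$. Collecting the coefficients of $m$ should reproduce exactly the matrix $\Lambda_3$, namely the terms $D_1^T\Delta\phi_1D_2\Lambda_1+2\theta_1R_1\Lambda_1+\Lambda_1^TD_2^T\Delta\phi_1D_2\Lambda_1+2\theta_1^2\Lambda_1^TR_1\Lambda_1+2\theta_1\Lambda_1^TR_1+\Lambda_1^TD_2^T\Delta\phi_1D_1$. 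Collecting the constant part should reproduce the bracket in \eqref{3.14}. Assumption \ref{Assumption 1} then gives the unique solution $m=\alpha_*$.

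\textbf{Main obstacle.} The main difficulty is the bookkeeping in Step 3. Symmetric and nonsymmetric cross terms, such as $\bar u^TD_1^T\Delta\phi_1D_2\Lambda_1\bar u$, must each be differentiated correctly, with or without a symmetrizing factor of 2, so that the coefficient matches $\Lambda_3$ exactly as the paper defines it. We will do this by expanding term by term rather than by matrix-calculus shortcuts. A secondary point is to note that the nonlocal dependence through $\bar u$ still yields a Gaussian with the stated $\boldsymbol{\pi}$-independent covariance. This holds because $\bar u$ enters the variation only through the linear coefficient of $u$.
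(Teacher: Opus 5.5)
Your proposal is correct and takes essentially the paper's approach: the paper omits this proof, saying only that it mirrors the Lagrange-multiplier/first-variation argument of Theorem~\ref{Theorem1}, and your Steps 1--3 are that argument with the follower's Gaussian response substituted, using the frozen-$\phi_2(\cdot;\boldsymbol{\pi}^*)$ convention the paper uses implicitly when it defines $\Lambda_1,\Lambda_2$. Carrying out Step 3 term by term does reproduce $\Lambda_3$ and the bracket in \eqref{3.14} exactly: for instance, $\bar u^T D_1(i)^T\Delta\phi_1(x,i) D_2(i)\Lambda_1\bar u$ contributes $D_1(i)^T\Delta\phi_1(x,i) D_2(i)\Lambda_1+\Lambda_1^TD_2(i)^T\Delta\phi_1(x,i)D_1(i)$, and $2\theta_1(i)\bar u^TR_1(i)\Lambda_1\bar u$ contributes $2\theta_1(i)\bigl(R_1(i)\Lambda_1+\Lambda_1^TR_1(i)\bigr)$, with no extra symmetrizing factor; as in the paper, though, the argument only gives a formal stationarity condition and tacitly needs $2R_1(i)+D_1(i)^T\Delta\phi_1(x,i)D_1(i)>0$ for the Gaussian to be normalizable.
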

\begin{proof} The proof follows a similar approach to that of Theorem \ref{Theorem1} and is therefore omitted for brevity.\end{proof}
\begin{remark}The leader's equilibrium policy $\boldsymbol{\pi}^*$ is also a Gaussian distribution, and its value depends on the current state $x$ and the regime $i$. The variance of $\boldsymbol{\pi}^*$ increases as $R_1(i)$ and $D_1(i)$ decrease. This suggests that the greater the leader's influence or control over the system, the more deterministic the equilibrium policy becomes. When the policy term weights are sufficiently large, the leader's equilibrium policy may even degenerate into a deterministic policy. This phenomenon also explains why monopolists, who possess significant ``resources'' in the field of economics, tend to adopt stable and deterministic policies.

Additionally, in \cite{li2019two}, the equilibrium policy is expressed as a definite value:
\[
u^* = \frac{\theta_1 R_2^{-1} B_2^T \nabla \phi_2(x)}{2(1 - \theta_1 \theta_2)} - \frac{R_1^{-1} B_1^T \nabla \phi_1(x)}{2(1 - \theta_1 \theta_2)^2} + \frac{\theta_2 R_1^{-1} B_1^T \nabla \phi_1(x)}{2(1 - \theta_1 \theta_2)^2}.
\]
When $\sigma(\cdot)=0$ and $\mathcal{M}=\{1\}$ (i.e., the system is free of stochastic perturbations and regime switching), the mean value of the equilibrium policy $\boldsymbol{\pi}^*$ aligns precisely with $u^*$.\end{remark}

Substitute \eqref{3.13} into \eqref{3.3}, and the equilibrium policy pair for player $I$ and player $II$ is
\begin{equation}\label{3.30}
\left\{\begin{array}{ll}
\boldsymbol{\pi}^*=\mathcal{N} \left(\alpha_* , \lambda_1 \left(2R_1(i) + D_1(i) ^T\Delta\phi_1(x,i)D_1(i) \right)^{-1}\right),\\
\boldsymbol{\gamma}^*=\mathcal{N} \left(\beta_*, \lambda_2\left(2R_2(i)  + D_2(i) ^T\Delta\phi_2(x,i)D_2(i) \right)^{-1}\right),
\end{array} \right.
\end{equation}
where $\alpha_*$ is given by \eqref{3.14} and
\begin{equation*}
\begin{aligned}
\beta_*=&-\left(2R_2(i)  + D_2(i) ^T\Delta\phi_2(x,i)D_2(i) \right)^{-1}\\
&\quad\left(2\theta_2(i)  R_2(i) +D_2(i)^T\Delta\phi_2(x,i)D_1(i) \right)\alpha_*\\
&-\left(2R_2(i)  + D_2(i) ^T\Delta\phi_2(x,i)D_2(i) \right)^{-1}\\
&\quad\left(D_2(i) ^T\Delta\phi_2(x,i)C(i) x+ B_2(i) ^T\nabla\phi_2(x,i)\right).
\end{aligned}
\end{equation*}
Then, substituting \eqref{3.30} into \eqref{3.1}, \eqref{3.2} and removing the min operator, we obtain the following coupled HJBI equations:
\begin{align}
 \rho_2\phi_2(x,i)=&\mathcal{H}_2(x, i,\nabla\phi_2(x,i),\Delta\phi_2(x,i), \boldsymbol{\pi}^*, \boldsymbol{\gamma}^*),\label{3.32}\\
 \rho_1\phi_1(x,i)=&\mathcal{H}_1(x, i,\nabla\phi_1(x,i),\Delta\phi_1(x,i), \boldsymbol{\pi}^*, \boldsymbol{\gamma}^*).\label{3.33}
\end{align}
\begin{theorem}\label{Theorem 3}
Under Assumption \ref{Assumption 1}, if $\phi_1(x,i)$ and $\phi_2(x,i)$ are solutions to the coupled HJBI equations \eqref{3.32} and \eqref{3.33}, then
\begin{enumerate}
    \item $J_1(x,i ,\boldsymbol{\pi}^*, \boldsymbol{\gamma}^*) \leq J_1(x,i, \boldsymbol{\pi}, \boldsymbol{\gamma}^*(\boldsymbol{\pi})), \, \forall \boldsymbol{\pi}\in\mathcal{U}$,
    \item $J_1(x,i, \boldsymbol{\pi}^*, \boldsymbol{\gamma}^*)=\phi_1(x,i)$;
    \item $J_2(x, i,\boldsymbol{\pi}, \boldsymbol{\gamma}^*(\boldsymbol{\pi})) \leq J_2(x,i, \boldsymbol{\pi}, \boldsymbol{\gamma}),\,\forall \boldsymbol{\gamma}\in\mathcal{V}$,
    \item $J_2(x,i, \boldsymbol{\pi}^*, \boldsymbol{\gamma}^*)=\phi_2(x,i)$.
\end{enumerate}
\end{theorem}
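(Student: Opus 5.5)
The plan is a verification argument: apply It\^o's formula for regime-switching diffusions to the discounted candidate value along the exploratory trajectory, then use the HJBI equations to obtain inequalities or equalities. Fix $(x,i)$, admissible $\boldsymbol{\pi}\in\mathcal{U}$, $\boldsymbol{\gamma}\in\mathcal{V}$, and write $X_s=X_s^{\boldsymbol{\pi},\boldsymbol{\gamma}}$, which is well defined by Proposition \ref{Proposition 1}. For $k=1,2$ the generalized It\^o formula for $(X_s,\alpha_s)$ gives
\begin{equation*}
\begin{aligned}
e^{-\rho_k t}\phi_k(X_t,\alpha_t)=&\,\phi_k(x,i)+\int_0^t e^{-\rho_k s}\Big[-\rho_k\phi_k+\nabla\phi_k^T\tilde b+\tfrac12 tr\big(\tilde\sigma\tilde\sigma^T\Delta\phi_k\big)\\
&+\textstyle\sum_j q_{\alpha_s j}\phi_k(X_s,j)\Big]ds+M_t,
\end{aligned}
\end{equation*}
where $M_t$ collects the Brownian stochastic integral and the compensated Markov-chain jump martingale. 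I would localize with stopping times $\tau_n=\inf\{s:|X_s|\ge n\}\wedge t$ so that $\e_{x,i}[M_{\tau_n}]=0$, take expectations, and then let $n\to\infty$ and $t\to\infty$. These limits are controlled by Assumption \ref{assumption 1}(iii),(v) through dominated or monotone convergence. The boundary term $\e_{x,i}[e^{-\rho_k t}\phi_k(X_t,\alpha_t)]$ vanishes along a subsequence by the transversality condition (iv). In that condition I read $J_k(X_s,\ldots)$ as the candidate value $\phi_k$, which is the natural interpretation.

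\emph{Follower (items 3 and 4).} Since $\phi_2$ solves \eqref{3.1}, the bracket in the It\^o formula satisfies $\mathcal{H}_2(\cdot,\boldsymbol{\pi},\boldsymbol{\gamma})-\rho_2\phi_2\ge 0$ for every $\boldsymbol{\gamma}$, with equality at the minimizer $\boldsymbol{\gamma}^*(\boldsymbol{\pi})$ of Theorem \ref{Theorem1}. The minimizer is unique because the Hamiltonian is strictly convex in $\boldsymbol{\gamma}$: entropy is strictly convex and the remaining terms are linear in $\boldsymbol{\gamma}$. Adding and subtracting $\tilde r_2+\lambda_2\int\boldsymbol{\gamma}\ln\boldsymbol{\gamma}$ inside the integral and passing to the limit gives
\[
\phi_2(x,i;\boldsymbol{\pi})\le J_2(x,i,\boldsymbol{\pi},\boldsymbol{\gamma}),
\]
with equality when $\boldsymbol{\gamma}=\boldsymbol{\gamma}^*(\boldsymbol{\pi})$. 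This yields item 3, and specializing to $\boldsymbol{\pi}=\boldsymbol{\pi}^*$ with \eqref{3.32} yields item 4. One point needs a check here. The Gaussian $\boldsymbol{\gamma}^*$ must be admissible and the minimum must be attained, which requires $2R_2(i)+D_2(i)^T\Delta\phi_2D_2(i)>0$; I would state this positivity as implicit in the well-posedness of \eqref{3.3}.

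\emph{Leader (items 1 and 2).} I repeat the argument with $\phi_1$ along the trajectory driven by $(\boldsymbol{\pi},\boldsymbol{\gamma}^*(\boldsymbol{\pi}))$. By \eqref{3.2}, $\mathcal{H}_1(\cdot,\boldsymbol{\pi},\boldsymbol{\gamma}^*(\boldsymbol{\pi}))\ge\rho_1\phi_1$ for all $\boldsymbol{\pi}$. Under Assumption \ref{Assumption 1}, Theorem \ref{Theorem 2} identifies the minimizer as $\boldsymbol{\pi}^*$ from \eqref{3.30}, and \eqref{3.33} gives equality at $(\boldsymbol{\pi}^*,\boldsymbol{\gamma}^*)$. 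The same limiting procedure then gives $J_1(x,i,\boldsymbol{\pi},\boldsymbol{\gamma}^*(\boldsymbol{\pi}))\ge\phi_1(x,i)=J_1(x,i,\boldsymbol{\pi}^*,\boldsymbol{\gamma}^*)$, which is items 1 and 2.

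The main obstacle is the leader step. The follower's response $\boldsymbol{\gamma}^*(\boldsymbol{\pi})$ is defined through $\phi_2(\cdot;\boldsymbol{\pi})$, which depends on $\boldsymbol{\pi}$, whereas \eqref{3.2} and \eqref{3.33} are written with the fixed $\phi_2=\phi_2(\cdot;\boldsymbol{\pi}^*)$. Pointwise minimization in \eqref{3.2} is only meaningful if $\boldsymbol{\gamma}^*(\boldsymbol{\pi})$ is understood as the feedback best response produced by the Hamiltonian minimization. I would adopt exactly this convention, consistent with Lemma \ref{Lemma 1}: $\boldsymbol{\gamma}^*(\boldsymbol{\pi})$ is the Hamiltonian argmin, and for each deviating $\boldsymbol{\pi}$ the follower's value is $\phi_2(\cdot;\boldsymbol{\pi})$, assumed to be a classical solution of \eqref{3.1}. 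With that convention fixed, the remaining inequality follows from the minimization in \eqref{3.2}. The secondary technical burden is justifying the localization and the limit exchange for quadratic $\phi_k$ with Gaussian policies. I would handle this by showing that the second moments of $X$ are finite on bounded intervals, using Proposition \ref{Proposition 1} together with Assumption \ref{assumption 1}(iii).
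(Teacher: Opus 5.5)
Your proposal is correct and is essentially the paper's own verification argument in Appendix~\ref{Appendix D}. Both use It\^o's formula for the regime-switching pair $(X_s,\alpha_s)$, localization, passage to $T\to\infty$ via Assumption~\ref{assumption 1}(iv), and the HJBI inequality $\mathcal{H}_k-\rho_k\phi_k\ge 0$ with equality at the equilibrium policies, with items 3--4 being the analogous follower argument that the paper omits. If anything your version is cleaner: you apply the Hamiltonian inequality pointwise along the single trajectory $X^{\boldsymbol{\pi},\boldsymbol{\gamma}^*(\boldsymbol{\pi})}$, whereas the paper's last step compares Hamiltonians along two different trajectories. You also state openly the assumptions the paper uses tacitly, namely $2R_2(i)+D_2(i)^T\Delta\phi_2 D_2(i)>0$, $\phi_2(\cdot;\boldsymbol{\pi})$ solving \eqref{3.1} for every deviating $\boldsymbol{\pi}$, the reading of (iv) with $\phi_k$ in place of $J_k$, and Theorem~\ref{Theorem 2}'s (formal) claim that $\boldsymbol{\pi}^*$ globally minimizes the leader's Hamiltonian.
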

The proof can be found in Appendix \ref{Appendix D}.

\section{PIA for ERRL LQ-SDG}\label{Section 4}
\setcounter{equation}{0}
\renewcommand{\theequation}{4.\arabic{equation}}
Although \eqref{3.30} provides optimal feedback strategies for both players, the HJBI equations are nonlinear second-order PDEs with integral terms, making analytical solutions difficult to derive. In this section, we propose a PIA \cite{wang2020continuous} to approximate the value of the ERRL LQ-SDGs and the optimal strategies under regime switching.

The PIA shares similarities with the actor-critic framework, as it involves both a policy (actor) and a value function (critic). Specifically:
\begin{itemize}
    \item \emph{Policy (Actor)}: The strategies $\boldsymbol{\pi}$ and $\boldsymbol{\gamma}$ generate actions.
    \item \emph{Value Function (Critic)}: $J_2(x, i, \boldsymbol{\pi}, \boldsymbol{\gamma})$, $J_1(x, i, \boldsymbol{\pi}, \boldsymbol{\gamma})$ estimate state-regime values and are updated based on policy performance.
\end{itemize}
However, PIA extends beyond standard actor-critic methods by incorporating \emph{robust optimization} via the $\min$ operator over uncertainty sets $\mathcal{P}$. This key distinction makes the PIA a specialized approach tailored to stochastic control problems.

As the PIA often requires discretizing the state space, which brings about the problem of the \lq\lq curse of dimensionality\rq\rq, referring to the relevant research of \cite{jia2022policy,zhou2021actor}, we propose a critic neural networks (critic NNs) algorithm in subsection \ref{sub5.3}. This algorithm mainly parameterizes the value function and uses the least-squares (LS) method to determine the parameter weights during the policy evaluation stage.

\subsection{Convergence of PIA}\label{sub5.1}
\begin{theorem}\label{Theorem 5} (Policy improvement theorem for player $II$). Let $\boldsymbol{\pi}$ be given and $\boldsymbol{\gamma}(\boldsymbol{\pi})$ be the corresponding given admissible feedback policy. Suppose that the corresponding performance function $J_2(\cdot,i,\boldsymbol{\pi},\boldsymbol{\gamma}(\boldsymbol{\pi})) \in C^{2}(\mathbb{R}^n)$ and satisfies $\Delta J_2(x,i,\boldsymbol{\pi},\boldsymbol{\gamma}(\boldsymbol{\pi}))>0$, for any $x\in\mathbb{R}^n$ and $i \in \mathcal{M}$, the feedback policy $\tilde{\boldsymbol{\gamma}}(\boldsymbol{\pi})$ defined by
\begin{equation}\label{5.1}
\begin{aligned}
&\tilde{\boldsymbol{\gamma}}(\boldsymbol{\pi})=\\
&\mathcal{N} \left(\tilde{\beta}, \lambda_2\left(2R_2(i) + D_2(i)^T\Delta J_2(x,i,\boldsymbol{\pi},\boldsymbol{\gamma}(\boldsymbol{\pi}))D_2(i)\right)^{-1}\right),  
\end{aligned}
\end{equation}
where
\begin{equation*}
\begin{aligned}
\tilde{\beta}&=-\left(2R_2(i) + D_2(i)^T\Delta J_2(x,i,\boldsymbol{\pi},\boldsymbol{\gamma}(\boldsymbol{\pi}))D_2(i)\right)^{-1}\\
&\left(2\theta_2(i) R_2(i)+D_2(i)^T\Delta J_2(x,i,\boldsymbol{\pi},\boldsymbol{\gamma}(\boldsymbol{\pi}))D_1(i)\right)\int_{\mathbb{R}^p}u\boldsymbol{\pi}(u)\,du\\
&-\left(2R_2(i) + D_2(i)^T\Delta J_2(x,i,\boldsymbol{\pi},\boldsymbol{\gamma}(\boldsymbol{\pi}))D_2(i)\right)^{-1}\\
&\quad\left(D_2(i)^T\Delta J_2(x,i,\boldsymbol{\pi},\boldsymbol{\gamma}(\boldsymbol{\pi}))C(i)x+ B_2(i)^T\nabla J_2(x,i,\boldsymbol{\pi},\boldsymbol{\gamma}(\boldsymbol{\pi}))\right).
\end{aligned}
\end{equation*}
Then, the performance function under specific policies $\tilde{{\boldsymbol{\gamma}}}(\boldsymbol{\pi})$ and ${\boldsymbol{\gamma}}(\boldsymbol{\pi})$ satisfies
\begin{equation}\label{5.3}
J_2(x,i,\boldsymbol{\pi},\tilde{\boldsymbol{\gamma}}(\boldsymbol{\pi}))\leq J_2(x,i,\boldsymbol{\pi},\boldsymbol{\gamma}(\boldsymbol{\pi})),\, \forall x\in\mathbb{R}^n, i \in \mathcal{M}.
\end{equation}
\end{theorem}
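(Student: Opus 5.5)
Write $\varphi(x,i):=J_2(x,i,\boldsymbol{\pi},\boldsymbol{\gamma}(\boldsymbol{\pi}))$. The plan follows the standard policy improvement argument for exploratory control: a Feynman--Kac identity for the old policy, a pointwise Hamiltonian inequality, and then It\^o's formula along the new policy.

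\emph{Step 1: Feynman--Kac characterization.} Since $\varphi(\cdot,i)\in C^2(\mathbb{R}^n)$, It\^o's formula for regime-switching diffusions, together with the Markov property of $(X^{\boldsymbol{\pi},\boldsymbol{\gamma}},\alpha)$, shows that $\varphi$ solves the linear system
\[
\rho_2\varphi(x,i)=\mathcal{H}_2\big(x,i,\nabla\varphi(x,i),\Delta\varphi(x,i),\boldsymbol{\pi},\boldsymbol{\gamma}(\boldsymbol{\pi})\big),\quad i\in\mathcal{M}.
\]
This is the same computation as in the derivation of Lemma \ref{Lemma 1} (Appendix \ref{Appendix C}), but with the policy fixed, so no minimization is involved.

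\emph{Step 2: pointwise minimization.} Fix $(x,i)$. Consider the functional $\boldsymbol{\gamma}\mapsto\mathcal{H}_2(x,i,\nabla\varphi,\Delta\varphi,\boldsymbol{\pi},\boldsymbol{\gamma})$ over densities. It has the form
\[
\int_{\mathbb{R}^p}\big[v^TMv/2+\ell^Tv\big]\boldsymbol{\gamma}(v)\,dv+\lambda_2\int_{\mathbb{R}^p}\boldsymbol{\gamma}\ln\boldsymbol{\gamma}\,dv+\text{const},
\]
where $M=2R_2(i)+D_2(i)^T\Delta\varphi D_2(i)$. The hypothesis $\Delta\varphi>0$ together with $R_2(i)>0$ gives $M>0$. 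The functional is therefore strictly convex, and by the Gibbs variational principle its unique minimizer is the density proportional to $\exp(-(v^TMv/2+\ell^Tv)/\lambda_2)$, i.e.\ $\mathcal{N}(-M^{-1}\ell,\lambda_2M^{-1})$. Reading off $\ell$ exactly as in the proof of Theorem \ref{Theorem1} (with $\phi_2$ replaced by $\varphi$) identifies this minimizer as $\tilde{\boldsymbol{\gamma}}(\boldsymbol{\pi})$ in \eqref{5.1}. Consequently
\[
\mathcal{H}_2\big(x,i,\nabla\varphi,\Delta\varphi,\boldsymbol{\pi},\tilde{\boldsymbol{\gamma}}(\boldsymbol{\pi})\big)\le\mathcal{H}_2\big(x,i,\nabla\varphi,\Delta\varphi,\boldsymbol{\pi},\boldsymbol{\gamma}(\boldsymbol{\pi})\big)=\rho_2\varphi(x,i).
\]

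\emph{Step 3: It\^o along the new policy.} Let $\tilde X=X^{\boldsymbol{\pi},\tilde{\boldsymbol{\gamma}}}$, which is well defined by Proposition \ref{Proposition 1} once $\tilde{\boldsymbol{\gamma}}\in\mathcal{V}$. Apply It\^o's formula with Markov switching to $e^{-\rho_2 s}\varphi(\tilde X_s,\alpha_s)$ up to the stopping time $\tau_n\wedge t$, where $\tau_n$ is the exit time of $\tilde X$ from the ball of radius $n$; this localization kills the stochastic integral and the compensated jump martingale. Using the inequality from Step 2 in the drift yields
\[
\varphi(x,i)\ge\mathbb{E}_{x,i}\Big[\int_0^{\tau_n\wedge t}e^{-\rho_2 s}\Big(\tilde r_2+\lambda_2\int_{\mathbb{R}^p}\tilde{\boldsymbol{\gamma}}_s\ln\tilde{\boldsymbol{\gamma}}_s\,dv\Big)ds\Big]+\mathbb{E}_{x,i}\big[e^{-\rho_2(\tau_n\wedge t)}\varphi(\tilde X_{\tau_n\wedge t},\alpha_{\tau_n\wedge t})\big].
\]
Now let $n\to\infty$ and then $t\to\infty$. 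The running term is handled by monotone or dominated convergence: $\tilde r_2\ge0$, and the entropy of a Gaussian with covariance $\lambda_2M^{-1}$ is controlled via Assumption \ref{assumption 1}(v). The terminal term is handled along a subsequence using the transversality condition of Assumption \ref{assumption 1}(iv). The limit gives $\varphi(x,i)\ge J_2(x,i,\boldsymbol{\pi},\tilde{\boldsymbol{\gamma}}(\boldsymbol{\pi}))$, which is \eqref{5.3}.

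\emph{Main obstacle.} The delicate step is this final limit passage. It requires checking that $\tilde{\boldsymbol{\gamma}}$ is admissible, meaning it has finite second moments, satisfies the integrability in (v), and satisfies the liminf condition (iv) for $\varphi$ evaluated along $\tilde X$. The hypotheses do not give this directly, since (iv) is stated for the old pair. I would first try to exploit the linear-in-$x$ mean and bounded covariance of $\tilde{\boldsymbol{\gamma}}$ to obtain moment bounds on $\tilde X$. Failing that, I would impose admissibility of $\tilde{\boldsymbol{\gamma}}$ as part of the hypotheses, as is customary in this literature (e.g.\ \cite{wang2020continuous}).
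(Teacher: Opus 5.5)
\paragraph{Comparison with the paper's proof.}
Your argument is correct and uses the same ingredients as the paper's proof in Appendix~\ref{Appendix E}: the identity $\rho_2\varphi=\mathcal{H}_2(\cdot,\boldsymbol{\pi},\boldsymbol{\gamma})$, the Gaussian \eqref{5.1} as pointwise minimizer of the Hamiltonian, It\^o's formula with switching under localization, and a limit passage. The substantive difference is the process along which It\^o's formula is applied.

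The paper applies it to $e^{-\rho_2 s}J_2(X^{\boldsymbol{\gamma}}_s,\alpha_s,\dots)$ along the \emph{old} process. There the drift contains $\tilde b,\tilde\sigma$ evaluated at $\boldsymbol{\gamma}$, so combining with the Feynman--Kac identity only reproduces the old running cost. The next inequality in Appendix~\ref{Appendix E} has the running cost along $X^{\tilde{\boldsymbol{\gamma}}}$ but the terminal term along $X^{\boldsymbol{\gamma}}$, and it does not follow from that computation.

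You instead apply It\^o's formula along $\tilde X=X^{\boldsymbol{\pi},\tilde{\boldsymbol{\gamma}}}$ and insert the pointwise inequality $\mathcal{H}_2(\cdot,\boldsymbol{\pi},\tilde{\boldsymbol{\gamma}})\le\rho_2\varphi$ into its drift. This is exactly what produces $J_2(x,i,\boldsymbol{\pi},\tilde{\boldsymbol{\gamma}})$ on the right-hand side, so your route is the one that actually establishes \eqref{5.3}.

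\paragraph{Your stated obstacle.}
The obstacle is milder than you think, because the transversality condition is not needed for this direction. For any density $\boldsymbol{\gamma}'$ and any $u$, the Gibbs variational bound gives $\int_{\mathbb{R}^p}(v+\theta_2(i)u)^TR_2(i)(v+\theta_2(i)u)\boldsymbol{\gamma}'(v)\,dv+\lambda_2\int_{\mathbb{R}^p}\boldsymbol{\gamma}'\ln\boldsymbol{\gamma}'\,dv\ge-\lambda_2\ln\int_{\mathbb{R}^p}e^{-v^TR_2(i)v/\lambda_2}\,dv$. The right-hand side is independent of $u$ and $x$ by translation invariance, and there are finitely many regimes.

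Integrating against $\boldsymbol{\pi}$ and using $x^TQ_2(i)x\ge0$, both the running integrand under $\tilde{\boldsymbol{\gamma}}$ and $\varphi$ itself are bounded below, by some $-c$ and $-c/\rho_2$ respectively. Two consequences follow.
\begin{itemize}
\item Monotone convergence handles the running term. Assumption~\ref{assumption 1}(v) does not control the entropy, so it is not the right tool there.
\item The terminal term satisfies $\mathbb{E}_{x,i}\big[e^{-\rho_2(\tau_n\wedge t)}\varphi(\tilde X_{\tau_n\wedge t},\alpha_{\tau_n\wedge t})\big]\ge-\frac{c}{\rho_2}\mathbb{E}_{x,i}\big[e^{-\rho_2(\tau_n\wedge t)}\big]$. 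This tends to $-\frac{c}{\rho_2}e^{-\rho_2 t}$ as $n\to\infty$, and then to $0$ as $t\to\infty$. That is all the inequality requires.
\end{itemize}

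What genuinely remains is that $\tilde X$ is well posed and non-explosive, so that $\tau_n\to\infty$. You also need the second moments of $\boldsymbol{\pi}(\cdot\mid y,j)$ to be locally bounded, so that your exit-time localization kills the It\^o and jump martingale terms. The paper takes all of this for granted as well.
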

The proof can be found in Appendix \ref{Appendix E}.

\begin{theorem}\label{Theorem 6} (Policy improvement theorem for player $I$). Let $\boldsymbol{\gamma}$ be given and $\boldsymbol{\pi}$ be the corresponding given admissible feedback policy. Suppose that the corresponding performance function $J_1(\cdot,i,\boldsymbol{\pi},\boldsymbol{\gamma}) \in C^{2}(\mathbb{R}^n)$ and satisfies $\Delta J_1(x,i,\boldsymbol{\pi},\boldsymbol{\gamma})>0$, for any $x\in\mathbb{R}^n$ and $i \in \mathcal{M}$, the feedback policy $\tilde{\boldsymbol{\pi}}$ defined by
\begin{equation}
\begin{aligned}
&\tilde{\boldsymbol{\pi}}=\\
&\mathcal{N} \left(\tilde{\alpha}, \lambda_1 \left(2R_1(i) + D_1(i)^T\Delta J_1(x,i,\boldsymbol{\pi},\boldsymbol{\gamma})D_1(i)\right)^{-1}\right),
\end{aligned}
\end{equation}
where
\begin{equation*}
\begin{aligned}
\tilde{\alpha}:=&-\dot{\Lambda}_3^{-1}\Big[B_1(i)^T\nabla J_1(x,i,\boldsymbol{\pi},\boldsymbol{\gamma})\\
&\quad+\dot{\Lambda}_1^TB_2(i)^T\nabla J_1(x,i,\boldsymbol{\pi},\boldsymbol{\gamma})\\
&\quad+D_1(i)^T\Delta J_1(x,i,\boldsymbol{\pi},\boldsymbol{\gamma})C(i)x\\
&\quad+\dot{\Lambda}_1^T D_2(i)^T\Delta J_1(x,i,\boldsymbol{\pi},\boldsymbol{\gamma})C(i)x\\
&\quad+\left(D_1(i)^T\Delta J_1(x,i,\boldsymbol{\pi},\boldsymbol{\gamma})D_2(i)\right.\\
&\quad+\dot{\Lambda}_1^TD_2(i)^T\Delta J_1(x,i,\boldsymbol{\pi},\boldsymbol{\gamma})D_2(i)\\
&\quad\left.+2\theta_1(i)R_1(i)+2\theta_1(i)^2\dot{\Lambda}_1^TR_1(i)\right)\dot{\Lambda}_2\Big],
\end{aligned}
\end{equation*}
\begin{equation*}
\begin{aligned}
\dot{\Lambda}_1:=&-\left(2R_2(i) + D_2(i)^T\Delta J_2(x,i,\boldsymbol{\pi},\boldsymbol{\gamma})D_2(i)\right)^{-1}\\
&\left(2\theta_2(i) R_2(i)+D_2(i)^T\Delta J_2(x,i,\boldsymbol{\pi},\boldsymbol{\gamma})D_1(i)\right),\\
\dot{\Lambda}_2:=&-\left(2R_2(i) + D_2(i)^T\Delta J_2(x,i,\boldsymbol{\pi},\boldsymbol{\gamma})D_2(i)\right)^{-1}\\
&\left(D_2(i)^T\Delta J_2(x,i,\boldsymbol{\pi},\boldsymbol{\gamma})C(i)x+ B_2(i)^T\nabla J_2(x,i,\boldsymbol{\pi},\boldsymbol{\gamma})\right),\\
\dot{\Lambda}_3:=&2R_1(i)+D_1(i)^T\Delta J_1(x,i,\boldsymbol{\pi},\boldsymbol{\gamma})D_1(i)\\
&+D_1(i)^T\Delta J_1(x,i,\boldsymbol{\pi},\boldsymbol{\gamma})D_2(i)\dot{\Lambda}_1+2\theta_1(i)\dot{\Lambda}_1^TR_1(i)\\
&+\dot{\Lambda}_1^TD_2(i)^T\Delta J_1(x,i,\boldsymbol{\pi},\boldsymbol{\gamma})D_2(i)\dot{\Lambda}_1+2\theta_1(i)R_1(i)\dot{\Lambda}_1\\
&+2\theta_1(i)^2\dot{\Lambda}_1^TR_1(i)\dot{\Lambda}_1 +\dot{\Lambda}_1^TD_2(i)^T\Delta J_1(x,i,\boldsymbol{\pi},\boldsymbol{\gamma})D_1(i).
\end{aligned}
\end{equation*}
Then the performance function under specific policies $\tilde{{\boldsymbol{\pi}}}$ and ${\boldsymbol{\pi}}$ satisfies
\begin{equation}
J_1(x,i,\tilde{\boldsymbol{\pi}},\boldsymbol{\gamma})\leq J_1(x,i,\boldsymbol{\pi},\boldsymbol{\gamma}),\, \forall x\in\mathbb{R}^n, i \in \mathcal{M}.
\end{equation}
\end{theorem}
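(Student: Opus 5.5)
Write $\phi:=J_1(\cdot,\cdot,\boldsymbol{\pi},\boldsymbol{\gamma})$. The argument mirrors the one used for Theorem \ref{Theorem 5}: a Feynman--Kac/Dynkin identity for $\phi$ under the old policy pair, a pointwise Hamiltonian comparison showing $\tilde{\boldsymbol{\pi}}$ does at least as well, and a verification step along the trajectory generated by $\tilde{\boldsymbol{\pi}}$. I read the follower's side as frozen at its best-response structure (the quantities $\dot\Lambda_1,\dot\Lambda_2$ built from $J_2$). That is, the follower's action distribution, viewed as a function of the leader's mean, is the Gaussian response of Theorem \ref{Theorem1} with $\phi_2$ replaced by $J_2$. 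This is what makes $\tilde\alpha$ the correct minimizer.

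\emph{Step 1: the linear PDE for $\phi$.} Since $\phi\in C^2$ and $(\boldsymbol{\pi},\boldsymbol{\gamma})$ is admissible, I apply It\^o's formula for regime-switching diffusions (generator including $\sum_j q_{ij}\phi(x,j)$) to $e^{-\rho_1 s}\phi(X_s,\alpha_s)$. Using the Markov property and Assumption \ref{assumption 1}(iv)--(v) to pass to the limit, this gives
\[
\rho_1\phi(x,i)=\mathcal{H}_1\bigl(x,i,\nabla\phi(x,i),\Delta\phi(x,i),\boldsymbol{\pi},\boldsymbol{\gamma}\bigr),
\]
for all $x\in\mathbb{R}^n$ and $i\in\mathcal{M}$.

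\emph{Step 2: $\tilde{\boldsymbol{\pi}}$ minimizes the Hamiltonian.} Substitute the follower's response into $\mathcal{H}_1$, with $\int v\,\boldsymbol{\gamma}(v)\,dv=\dot\Lambda_1\int u\,\boldsymbol{\pi}(u)\,du+\dot\Lambda_2$. The resulting functional of $\boldsymbol{\pi}$ is
\[
\int \boldsymbol{\pi}(u)\,\bigl[\tfrac12 u^T\dot\Lambda_3 u + \ell^T u + c\bigr]\,du+\lambda_1\int\boldsymbol{\pi}\ln\boldsymbol{\pi}
\]
plus mean-coupling terms, with $\ell$ and $c$ collecting the terms linear and constant in $u$. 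These mean-coupling terms are quadratic in $\bar u=\int u\,\boldsymbol{\pi}(u)\,du$; they arise because the cross terms involving $v$ depend only on the mean. I will run the same Lagrange-multiplier/first-variation computation as in Theorem \ref{Theorem1}. The stationary density is Gaussian, with covariance $\lambda_1(2R_1+D_1^T\Delta\phi D_1)^{-1}$ coming from the genuinely $u$-quadratic part. Its mean solves the linear equation $\dot\Lambda_3\bar u=-[\cdots]$, which gives $\tilde\alpha$.

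Convexity is what makes this stationary point a global minimizer. The hypothesis $\Delta J_1>0$ together with $R_1>0$ gives $2R_1+D_1^T\Delta\phi D_1>0$. The entropy term $\lambda_1\int\boldsymbol{\pi}\ln\boldsymbol{\pi}$ is strictly convex. Invertibility of $\dot\Lambda_3$, the analogue of Assumption \ref{Assumption 1}, makes the mean uniquely defined. Hence
\[
\mathcal{H}_1(x,i,\nabla\phi,\Delta\phi,\tilde{\boldsymbol{\pi}},\boldsymbol{\gamma})\le \mathcal{H}_1(x,i,\nabla\phi,\Delta\phi,\boldsymbol{\pi},\boldsymbol{\gamma})=\rho_1\phi(x,i).
\]

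\emph{Step 3: verification.} Let $\tilde X$ be the state process driven by $(\tilde{\boldsymbol{\pi}},\boldsymbol{\gamma})$; it is well defined by Proposition \ref{Proposition 1}, since the Gaussian feedback is affine in $x$ with bounded covariance and hence admissible. Apply It\^o's formula to $e^{-\rho_1 s}\phi(\tilde X_s,\alpha_s)$ up to $t\wedge\tau_N$, where $\tau_N$ is a localizing exit time. Taking expectations and inserting the Step 2 inequality gives
\[
\phi(x,i)\ge \e_{x,i}\Bigl[\int_0^{t\wedge\tau_N}e^{-\rho_1 s}\bigl(\tilde r_1+\lambda_1\textstyle\int\tilde{\boldsymbol{\pi}}_s\ln\tilde{\boldsymbol{\pi}}_s\bigr)ds\Bigr]+\e_{x,i}\bigl[e^{-\rho_1(t\wedge\tau_N)}\phi(\tilde X_{t\wedge\tau_N},\alpha_{t\wedge\tau_N})\bigr].
\]
Letting $N\to\infty$ and then $t\to\infty$ along a $\liminf$ sequence, using Assumption \ref{assumption 1}(iv) for the terminal term and monotone/dominated convergence for the integral, yields $\phi(x,i)\ge J_1(x,i,\tilde{\boldsymbol{\pi}},\boldsymbol{\gamma})$.

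\emph{Main obstacle.} I expect two difficulties. The first is the limiting argument in Step 3: the terminal condition (iv) is assumed for admissible pairs, so one must check that $(\tilde{\boldsymbol{\pi}},\boldsymbol{\gamma})$ is admissible. In particular, $\e[e^{-\rho_1 t}\phi(\tilde X_t,\alpha_t)]$ needs a quadratic growth bound on $\phi$ and a moment estimate on $\tilde X$. The entropy integrand is bounded below for Gaussians with covariance bounded away from zero, so its contribution is controlled. The second is showing in Step 2 that the leader's reduced objective, which contains the mean-coupled terms through $\dot\Lambda_1$, is convex in $\boldsymbol{\pi}$ and not merely stationary at $\tilde{\boldsymbol{\pi}}$. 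I would try to establish this by splitting $\boldsymbol{\pi}$ into its mean and its centered part. The centered part is handled by Gibbs' variational inequality, which shows that among densities with a given mean the Gaussian with the stated covariance minimizes. This reduces the problem to a finite-dimensional quadratic in $\bar u$, and there I would need the symmetric part of $\dot\Lambda_3$ to be positive semidefinite. If that property does not follow from the hypotheses, I would state it as the natural strengthening of Assumption \ref{Assumption 1}.
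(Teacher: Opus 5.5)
Your proposal is correct and follows the same route as the paper, whose proof is omitted with a pointer to the argument for Theorem \ref{Theorem 5}: the It\^o/Dynkin identity for the old performance function, the Hamiltonian comparison $\rho_1 J_1=\mathcal{H}_1(\ldots,\boldsymbol{\pi},\cdot)\ge\mathcal{H}_1(\ldots,\tilde{\boldsymbol{\pi}},\cdot)$, localization, and Assumption \ref{assumption 1}(iv); your explicit reading of $\boldsymbol{\gamma}$ as the Gaussian response frozen through $\dot\Lambda_1,\dot\Lambda_2$, and your verification along the new trajectory $\tilde X$, are in fact more careful than Appendix \ref{Appendix E}. Your Step 2 convexity worry is unnecessary: grouping terms gives $\dot\Lambda_3=2(I+\theta_1\dot\Lambda_1)^TR_1(I+\theta_1\dot\Lambda_1)+(D_1+D_2\dot\Lambda_1)^T\Delta J_1(D_1+D_2\dot\Lambda_1)$, which is positive semidefinite because $R_1>0$ and $\Delta J_1>0$ (and positive definite once $\dot\Lambda_3^{-1}$ exists), so your mean/centered split with Gibbs' inequality gives a genuine global minimizer without strengthening Assumption \ref{Assumption 1}.
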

\begin{proof}
The proof follows a similar procedure to that of Theorem \ref{Theorem 5} and is therefore omitted for brevity.
\end{proof}
\subsection{Algorithm design}\label{sub5.2}
We have developed a PIA in the game setting with regime switching. The main algorithm consists of two parallel steps: policy improvement and policy evaluation. For policy improvement, Theorem \ref{Theorem 5} and Theorem \ref{Theorem 6} ensure convergence. For policy evaluation, we compute the value functions of player $I$ and player $II$ respectively under any admissible policy pair $(\boldsymbol{\pi},\boldsymbol{\gamma})$ for all regimes $i \in \mathcal{M}$. The algorithm mainly consists of four steps:

\emph{Step 1 (Initial value):} For all $i \in \mathcal{M}$, initialize $V_1^{(0)}(x,i)$, $V_2^{(0)}(x,i)$, $\alpha^{(0)}$, $\beta^{(0)}$. Then, the initial policy pair $(\boldsymbol{\pi}^{(0)},\boldsymbol{\gamma}^{(0)})$ is
\begin{equation*}
\begin{aligned}
\mathcal{N} \left(\alpha^{(0)},\lambda_1\left(2R_1(i) + D_1(i)^T\Delta V^{(0)}_1(x,i)D_1(i)\right)^{-1}\right),\\
\mathcal{N} \left(\beta^{(0)},\lambda_2\left(2R_2(i) + D_2(i)^T\Delta V^{(0)}_2(x,i)D_2(i)\right)^{-1}\right).
\end{aligned}
\end{equation*}
Let $s = 0$ and $\epsilon$ be a small constant.

\emph{Step 2 (Iteration for the leader): } For each $i \in \mathcal{M}$,
\begin{itemize}
\item Value function update step:
\begin{equation*}
\begin{aligned}
&V_1^{(s+1)}(x, i) \\
&=\frac{1}{\rho_1}\mathcal{H}_1(x, i, \nabla V^{(s)}_1(x, i),\Delta V^{(s)}_1(x, i), \boldsymbol{\pi}^{(s)}, \boldsymbol{\gamma}^{(s)}).
\end{aligned}
\end{equation*}

\item Policy update step:
\begin{equation*}
\begin{aligned}
&\boldsymbol{\pi}^{(s+1)}=\\
&\mathcal{N} \left(\alpha^{(s+1)},\lambda_1\left(2R_1(i) + D_1(i)^T\Delta V^{(s+1)}_1(x,i)D_1(i)\right)^{-1}\right),
\end{aligned}
\end{equation*}
where $\alpha^{(s+1)}$ is updated exactly as derived in Section \ref{Section 3}, using the known matrices $B_1(i),B_2(i),D_1(i), D_2(i)$ and the value function $\nabla V_1^{(s+1)}(x,i)$, $\Delta V_1^{(s+1)}(x,i)$, $\nabla V_2^{(s)}(x,i),\Delta V_2^{(s)}(x,i)$.
\end{itemize}

\emph{Step 3 (Iteration for the follower):} For each $i \in \mathcal{M}$,
\begin{itemize}
\item Value function update step:
\begin{equation*}\label{5.19}
\begin{aligned}
&V_2^{(s+1)}(x, i)\\
&=\frac{1}{\rho_2}\mathcal{H}_2(x, i, \nabla V^{(s)}_2(x, i),\Delta V^{(s)}_2(x, i), \boldsymbol{\pi}^{(s+1)}, \boldsymbol{\gamma}^{(s)}).
\end{aligned}
\end{equation*}

\item Policy update step:
\begin{equation*}\label{5.20}
\begin{aligned}
&\boldsymbol{\gamma}^{(s+1)}=\\
&\mathcal{N} \left(\beta^{(s+1)},\lambda_2\left(2R_2(i) + D_2(i)^T\Delta V^{(s+1)}_2(x,i)D_2(i)\right)^{-1}\right),
\end{aligned}
\end{equation*}
where $\beta^{(s+1)}$ is updated according to the gradients of $V_1^{(s+1)}(x,i)$, $V_2^{(s+1)}(x,i)$ as previously established.
\end{itemize}

\emph{Step 4 (Convergence criterion):} If $\max_{i \in \mathcal{M}} |V_k^{(s + 1)}(x,i) - V_k^{(s)}(x,i)|\leq \epsilon$ for $k= 1,2$, stop and output
\begin{equation*}
\left\{\begin{array}{ll}
\boldsymbol{\pi}^*=\boldsymbol{\pi}^{(s+1)},\,
\boldsymbol{\gamma}^*=\boldsymbol{\gamma}^{(s+1)},\\
V_1(x,i)=V_1^{(s + 1)}(x,i),\\
V_2(x,i)=V_2^{(s + 1)}(x,i).
\end{array} \right.
\end{equation*}
Otherwise, let $s = s + 1$ and go back to execute \emph{Steps 2-3} in sequence.

We summarize the results into \emph{Algorithm \ref{Algorithm 1}}.
\begin{algorithm}[h!]
\caption{Policy improvement algorithm}
\label{Algorithm 1}
\begin{algorithmic}
\STATE \textbf{Step 1 (Initial value):} For all $i \in \mathcal{M}$, initialize $V_1^{(0)}(x,i)$, $V_2^{(0)}(x,i)$, $\alpha^{(0)}$, $\beta^{(0)}$. Then get the initial policy pair $(\boldsymbol{\pi}^{(0)},\boldsymbol{\gamma}^{(0)})$.
Let $s = 0$ and $\epsilon$ be a small constant.
\STATE \textbf{Step 2 (Iteration for the leader): } For $i \in \mathcal{M}$:
\begin{itemize}
\item Update value function $V_1^{(s+1)}(x,i)$.
\item Update $\Lambda_1^{(s)}(i)$, $\Lambda_2^{(s)}(i)$, $\Lambda_3^{(s)}(i)$, $\alpha^{(s+1)}$ to obtain policy $\boldsymbol{\pi}^{(s+1)}$.
\end{itemize}
\STATE \textbf{Step 3 (Iteration for the follower):} For $i \in \mathcal{M}$:
\begin{itemize}
\item Update value function $V_2^{(s+1)}(x,i)$.
\item Update $\beta^{(s+1)}$ to obtain policy $\boldsymbol{\gamma}^{(s+1)}$.
\end{itemize}
\STATE \textbf{Step 4 :} If $\max_{i \in \mathcal{M}} |V_k^{(s + 1)}(x,i) - V_k^{(s)}(x,i)| \leq \epsilon$ for $k = 1,2$, stop and output the optimal policies and value functions.
Otherwise, let $s = s + 1$ and go back to execute \textbf{Steps 2-3} in sequence.
\end{algorithmic}
\end{algorithm}

\subsection{Implement the algorithm using critic NNs}\label{sub5.3}
The implementation process of \emph{Algorithm \ref{Algorithm 1}} faces significant challenges in practical applications. Traditional numerical solution methods typically require discretizing the continuous state space, a process highly susceptible to the \lq\lq curse of dimensionality\rq\rq---specifically, as the dimensionality of the system state increases, the required computational effort grows exponentially. To better address the challenges of high-dimensionality, a promising direction is to regard artificial neural networks as a more flexible and efficient function approximation tool \cite{zhou2021actor, jia2022policy}. In this subsection, we propose a critic NNs algorithm to implement the  policy evaluation stage using the least-squares (LS) method.

Since the regime $i \in \mathcal{M} = \{1, \dots, m\}$ takes discrete values, we can approximate the value function for each regime using separate sets of weights. According to the Weierstrass high-order approximation theorem, the smooth value function $V_k(x, i)$ for $k=1,2$ can be approximated using the critic NNs as follows
\begin{equation}\label{5.22}
\overline{V}_{k}(x, i) =(W_{k}(i))^T \varphi_{k}(x),
\end{equation}
where $\varphi_k(x) \in \mathbb{R}^{h_k}$ is the critic NNs activation function vector with $h_{k}$ being the number of neurons in the critic NNs hidden layer, and $W_{k}(i)\in\mathbb{R}^{h_{k}}$ is a constant weight vector corresponding to regime $i$. The corresponding gradient and Hessian matrix with respect to state $x$ can be obtained as
\begin{equation*}
\nabla\overline{V}_{k}(x, i) =(W_{k}(i))^T\frac{\partial\varphi_{k}(x)}{\partial x},
\end{equation*}
\begin{equation*}
\Delta\overline{V}_{k}(x, i) =(W_{k}(i))^T\frac{\partial^2\varphi_{k}(x)}{\partial x^2},
\end{equation*}
where ${\partial\varphi_{k}(x)}/{\partial x}$ is a two-dimensional tensor of $h_{k}\times n$, and ${\partial^2\varphi_{k}(x)}/{\partial x^2}$ is a three-dimensional tensor of $h_{k}\times n\times n$, for $k=1,2$.

Substitute the approximate form $\overline{V}_{k}(x,i)$ of $V_{k}(x,i)$ in equation \eqref{5.22} into \emph{Algorithm \ref{Algorithm 1}}, so that $\Lambda_1^{(s)}$, $\Lambda_2^{(s)}$, $\Lambda_3^{(s)}$, $\alpha^{(s+1)}$, $\boldsymbol{\pi}^{(s + 1)}$ and $\beta^{(s+1)}$, $\boldsymbol{\gamma}^{(s + 1)}$ in the \emph{Policy update step} can be updated. Subsequently, the \emph{Value function update step} becomes a key point of concern. For each regime $i \in \mathcal{M}$, the corresponding residual errors for player $I$ and player $II$ are
\begin{equation*}
\begin{aligned}
\tilde{e}_1^{(s)}(i)&=(W_{1}^{(s+1)}(i))^T \varphi_{1}(x)\\
&-\frac{1}{\rho_1}\mathcal{H}_1(x, i, \nabla\overline{V}^{(s)}_1(x,i),\Delta\overline{V}^{(s)}_1(x,i),\boldsymbol{\pi}^{(s)},\boldsymbol{\gamma}^{(s)}),\\
\tilde{e}_2^{(s)}(i)&=(W_{2}^{(s+1)}(i))^T \varphi_{2}(x)\\
&-\frac{1}{\rho_2}\mathcal{H}_2(x, i, \nabla\overline{V}^{(s)}_2(x,i),\Delta\overline{V}^{(s)}_2(x,i),\boldsymbol{\pi}^{(s+1)},\boldsymbol{\gamma}^{(s)}).
\end{aligned}
\end{equation*}

Now our objective is to find the appropriate weight vector $W_{k}^{(s + 1)}(i)$ such that the residual error $\tilde{e}_k^{(s)}(i)$ approaches zero. In fact, we can regard this problem as a LS problem, where $W_{k}^{(s + 1)}(i)$ is the unknown parameter, $\varphi_{k}(x)$ is the regression vector, and $\frac{1}{\rho_k}\mathcal{H}_k$ is a known parameter calculated using the observed system data. After collecting the system data at $M$ points ($M > h_{k}$), the update law of the critic NNs for each regime $i$ is
\begin{equation}\label{5.25}
W_k^{(s+1)}(i) = \left(\xi_k^{(s)}(\xi_k^{(s)})^T \right)^{-1}\xi_k^{(s)} \chi_k^{(s)}(i),
\end{equation}
where $\xi_k^{(s)}=\begin{bmatrix}\varphi_k(x_{t_1})&\cdots&\varphi_k(x_{t_M})\end{bmatrix}\in \mathbb{R}^{h_k \times M}$ and $\chi_k^{(s)}(i) =\frac{1}{\rho_k}\begin{bmatrix}\mathcal{H}_k(t_1, i)&\cdots&\mathcal{H}_k(t_M, i)\end{bmatrix}^T\in \mathbb{R}^{M \times 1}$, for $k=1,2$.

We summarize the results into \emph{Algorithm \ref{Algorithm 2}}.
\begin{algorithm}[h!]
\caption{Critic neural networks algorithm}
\label{Algorithm 2}
\begin{algorithmic}
\STATE \textbf{Step 1 (Initial value):} For all $i \in \mathcal{M}$, initialize $W_{1}^{(0)}(i)$, $W_{2}^{(0)}(i)$, $\alpha^{(0)}$, $\beta^{(0)}$. Collect system data at $M$ points. Select appropriate activation functions $\varphi_1$ and $\varphi_2$. Then, $\Delta\overline{V}_{k}(x,i) =(W_{k}(i))^T\frac{\partial^2\varphi_{k}(x)}{\partial x^2}$, for $k=1,2$. The initial policy pair is
\begin{equation*}
\begin{aligned}
\boldsymbol{\pi}^{(0)}
=\mathcal{N} \left(\alpha^{(0)},\lambda_1\left(2R_1(i) + D_1(i)^T\Delta\overline{V}_{1}(x,i)D_1(i)\right)^{-1}\right),\\
\boldsymbol{\gamma}^{(0)}
=\mathcal{N} \left(\beta^{(0)},\lambda_2\left(2R_2(i) + D_2(i)^T\Delta\overline{V}_{2}(x,i)D_2(i)\right)^{-1}\right).
\end{aligned}
\end{equation*}
Let $s = 0$ and $\epsilon$ be a small constant.
\STATE \textbf{Step 2 (Iteration for the leader):} For $i \in \mathcal{M}$:
\begin{itemize}
\item Update the weight $W_{1}^{(s+1)}(i)$ according to \eqref{5.25}.
\item Update $\Lambda_1^{(s)}$, $\Lambda_2^{(s)}$, $\Lambda_3^{(s)}$, $\alpha^{(s+1)}$ to obtain policy $\boldsymbol{\pi}^{(s+1)}$.
\end{itemize}
\STATE \textbf{Step 3 (Iteration for the follower):} For $i \in \mathcal{M}$:
\begin{itemize}
\item Update the weight $W_{2}^{(s+1)}(i)$ according to \eqref{5.25}.
\item Update $\beta^{(s+1)}$ to obtain policy $\boldsymbol{\gamma}^{(s+1)}$.
\end{itemize}
\STATE \textbf{Step 4 (Convergence criterion):} If $\max_{i \in \mathcal{M}} ||W_{k}^{(s+1)}(i)-W_{k}^{(s)}(i)|| \leq \epsilon$ for $k = 1,2$, stop and output
\begin{equation*}
\left\{\begin{array}{ll}
\boldsymbol{\pi}^*=\boldsymbol{\pi}^{(s+1)},\,
\boldsymbol{\gamma}^*=\boldsymbol{\gamma}^{(s+1)},\\
V_1(x,i)=(W_{1}^{(s+1)}(i))^T \varphi_{1}(x),\\
V_2(x,i)=(W_{2}^{(s+1)}(i))^T \varphi_{2}(x).
\end{array} \right.
\end{equation*}
Otherwise, let $s = s + 1$ and go back to execute \textbf{Steps 2-3} in sequence.
\end{algorithmic}
\end{algorithm}

\section{Simulation results}\label{Section 5}
\setcounter{equation}{0}
\renewcommand{\theequation}{5.\arabic{equation}}
In this section, we provide a simulation example to illustrate the effectiveness of the proposed critic NNs algorithm, and use this algorithm to analyze some characteristics of the two-player Stackelberg game under regime switching.

Suppose the Markov chain has two regimes, i.e., $\mathcal{M}=\{1,2\}$. The parameters in the regime-switching SDE are selected as follows:
\begin{equation*}
B_1(1) = \begin{bmatrix}
0.3 \\
0.8
\end{bmatrix}, \,
B_2(1) = \begin{bmatrix}
0.8 \\
0.2
\end{bmatrix}, 
D_1(1) = \begin{bmatrix}
0.3 \\
0.8
\end{bmatrix}, \,
\end{equation*}
\begin{equation*}
D_2(1) = \begin{bmatrix}
-0.8 \\
0.2
\end{bmatrix},\,
A(1) = \begin{bmatrix}
0 & 1 \\
0 & -1
\end{bmatrix}, \,
C(1) = \begin{bmatrix}
1 & 0 \\
0 & -1
\end{bmatrix};
\end{equation*}

\begin{equation*}
B_1(2) = \begin{bmatrix}
0.4 \\
0.7
\end{bmatrix}, \,
B_2(2) = \begin{bmatrix}
0.6 \\
0.3
\end{bmatrix}, \,
D_1(2) = \begin{bmatrix}
0.4 \\
0.6
\end{bmatrix}, 
\end{equation*}
\begin{equation*}
D_2(2) = \begin{bmatrix}
-0.6 \\
0.4
\end{bmatrix},\,
A(2) = \begin{bmatrix}
0 & 0.8 \\
0 & -1.2
\end{bmatrix}, \,
C(2) = \begin{bmatrix}
0.8 & 0 \\
0 & -0.8
\end{bmatrix}.
\end{equation*}

The parameters in the performance functions for each regime are selected as follows:
\begin{equation*}
Q_1(1) = \begin{bmatrix}
0.1 & 0 \\
0 & 0.1
\end{bmatrix}, \,
Q_2(1) = \begin{bmatrix}
0.5 & 0 \\
0 & 0.5
\end{bmatrix},
\end{equation*}
\begin{equation*}
R_1(1) = 5, \, R_2(1) = 8, \, \theta_1(1) = 0.4, \, \theta_2(1) = 0.1;
\end{equation*}

\begin{equation*}
Q_1(2) = \begin{bmatrix}
0.2 & 0 \\
0 & 0.2
\end{bmatrix}, \,
Q_2(2) = \begin{bmatrix}
0.4 & 0 \\
0 & 0.4
\end{bmatrix},
\end{equation*}
\begin{equation*}
R_1(2) = 6, \, R_2(2) = 7, \, \theta_1(2) = 0.3, \, \theta_2(2) = 0.2.
\end{equation*}

The generator matrix of the Markov chain is
\begin{equation*}
\Gamma = \begin{bmatrix}
-0.5 & 0.5 \\
0.4 & -0.4
\end{bmatrix}.
\end{equation*}

Other parameters are $\rho_1 = 3, \rho_2 = 3, \lambda_1 = 3, \lambda_2 = 4$. The initial state is $x=\begin{bmatrix}3 & 3\end{bmatrix}^T$, and the initial regime is $\alpha_0 = 1$. The algorithm starts from $W_{k}^{(0)}(i)=\begin{bmatrix}1 & 1 & 1 & 1\end{bmatrix}^T$, $\alpha^{(0)}=0$, and $\beta^{(0)}=0$ for $k \in \{1, 2\}$ and $i \in \{1, 2\}$. Set the convergence criterion as $\epsilon=10^{-3}$. The activation function is selected as
\begin{equation}\label{6.1}
\varphi_1(x)=\varphi_2(x)=\begin{bmatrix}x_1^2 & x_1x_2 & x_2^2 & 1\end{bmatrix}^T.
\end{equation}

\subsection{The convergence of the algorithm}
We first implement the critic NNs \emph{Algorithm \ref{Algorithm 2}}. The weights of the critic NNs are updated after collecting $M = 50$ points. Each point is sampled every $0.01$ seconds, and the $6$th point is selected as the observation point. Fig. \ref{fig:img1} illustrates the convergence of the weight vectors in the critic NNs for both the leader and the follower under different regimes. As depicted, both networks successfully meet the convergence criteria after $81$ iterations.
\begin{figure}[h!]
	\centering
	\includegraphics[width=3.2in]{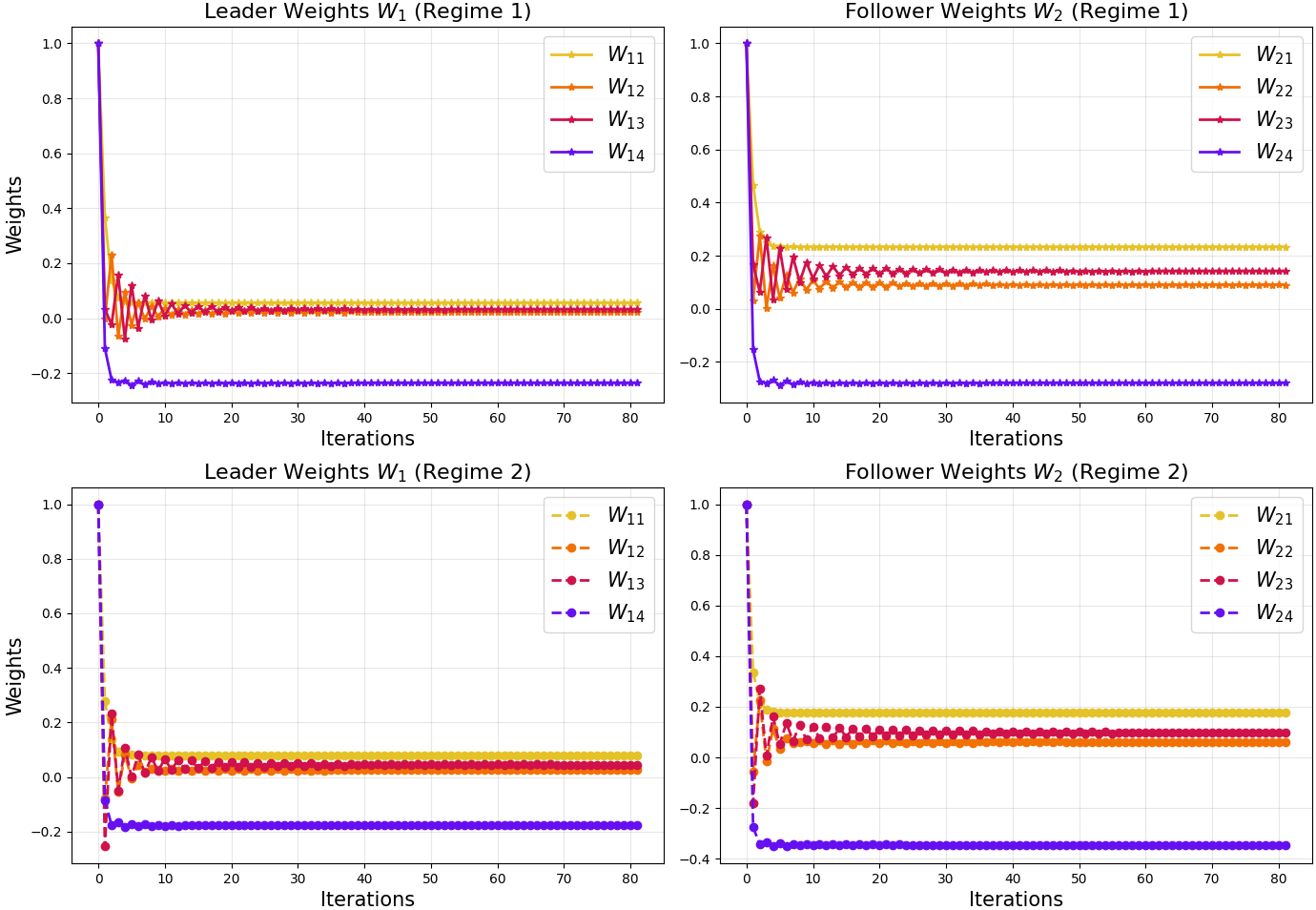}
	\caption{Convergence of the weight vectors.}
	\label{fig:img1}
\end{figure}

The loss function is crucial for evaluating the performance of the algorithm. We define the loss function using the Kullback-Leibler (KL) divergence:
\begin{equation*}
\begin{aligned}
L(\mu, \Sigma) =& \frac{1}{2} \Big(tr(\Sigma^{-1} \Sigma^*) + (\mu - \mu^*)^T \Sigma^{-1} (\mu - \mu^*)\\
&- 3 + \ln \frac{\det(\Sigma^*)}{\det(\Sigma)} \Big),
\end{aligned}
\end{equation*}
where $\mu$ and $\Sigma$ are the mean and covariance matrix of the current distribution, and $\mu^*$ and $\Sigma^*$ are the mean and covariance matrices of the target distribution. As shown in
Fig. \ref{fig:img2}, the value of the loss function drops below $10^{-7}$ after a finite number of iterations for both regimes, indicating stable convergence.
\begin{figure}[h!]
	\centering
	\includegraphics[width=3.2in]{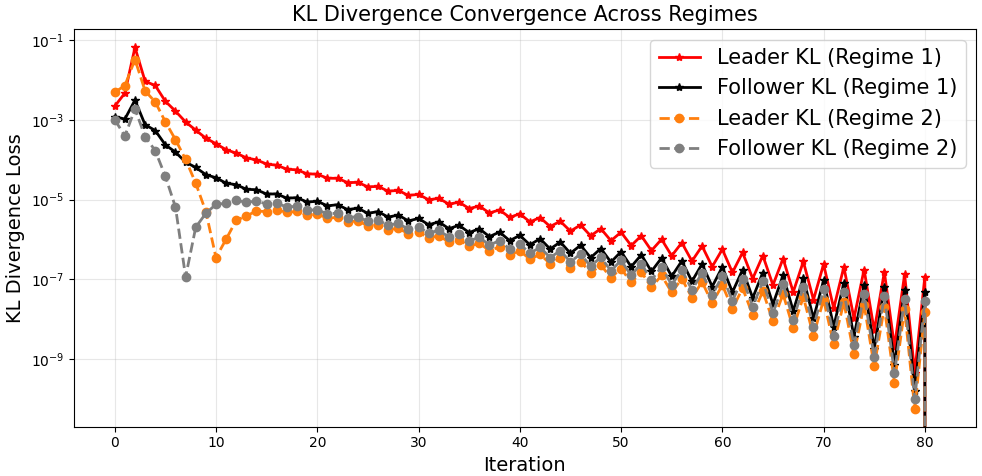}
	\caption{Convergence of the KL loss function.}
	\label{fig:img2}
\end{figure}

\subsection{Impact of temperature parameter on value function}
Figure \ref{fig:img3} illustrates the regime-dependent value functions $V_k(x, i)$ for both the leader ($k=1$) and the follower ($k=2$) under various temperature parameter configurations, specifically $\lambda_1, \lambda_2 \in \{(1, 1), (4, 4), (6, 6)\}$.

A primary observation from the numerical results is that the amplitude of the value functions exhibits a monotonic decrease as the temperature parameter $\lambda$ increases. This phenomenon is consistent across both regimes. Within the proposed framework, where negative entropy is incorporated as an auxiliary reward to promote exploration, a larger $\lambda$ effectively heightens the agents' ability to traverse the state space and adapt to environmental transitions. This enhanced exploration facilitates the discovery of more globally optimal policies, which manifests as a reduction in the magnitude of the value surfaces.
Furthermore, a comparative analysis of the two agents reveals a distinct disparity in sensitivity to the temperature parameter:
\begin{itemize}
    \item For the leader, the value function $V_1(x, i)$ shows significant vertical displacement as $\lambda$ varies, indicating that the leader's strategic value is highly sensitive to the exploration-exploitation trade-off.
    \item For the follower, the value function $V_2(x, i)$ exhibits much tighter clustering across the different $\lambda$ settings.
\end{itemize}
This suggests that the follower's value profile is relatively robust to changes in the temperature parameter, potentially due to its reactive role within the Stackelberg structure.
\begin{figure}[h!]
	\centering
	\includegraphics[width=3.4in]{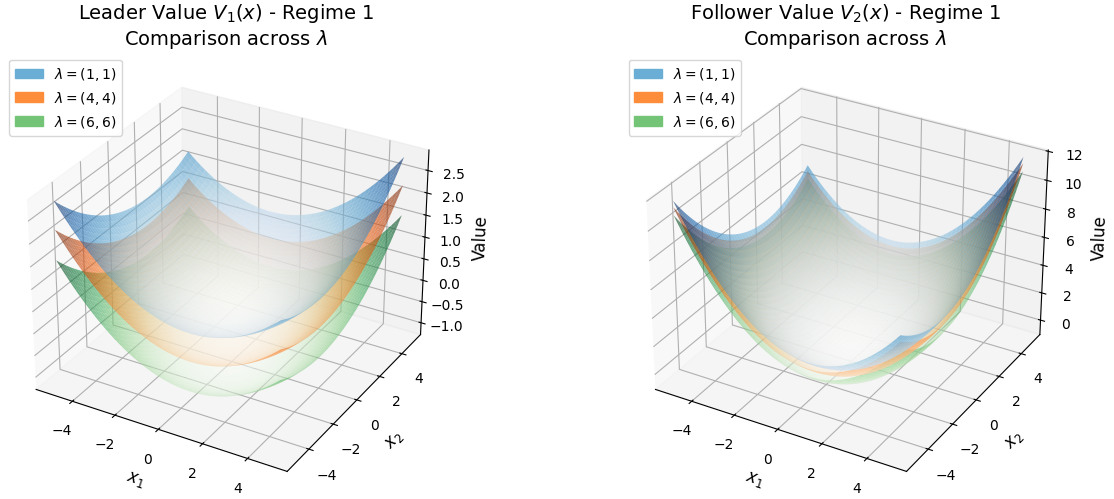}
	\caption{Evolution of the leader and follower value surfaces under different temperature parameter $\lambda$.}
	\label{fig:img3}
\end{figure}

Fig. \ref{fig:img4} plots the evolution of the leader’s value surface \(V_1(x)\) in Regime $1$ while fixing \(\lambda_2=4\). The results show that the magnitude of \(V_1(x)\) decreases monotonically with increasing \(\lambda_1\). As a key hyperparameter controlling exploration intensity, a larger \(\lambda_1\) amplifies the negative entropy reward, enabling the leader to explore the full state space instead of local high-reward regions. This leads to the discovery of globally robust policies, evidenced by the downward shift in the value surface—though reducing the overall expected value, the strategy gains superior adaptability to dynamic environments.
\begin{figure}[h!]
	\centering
	\includegraphics[width=3.2in]{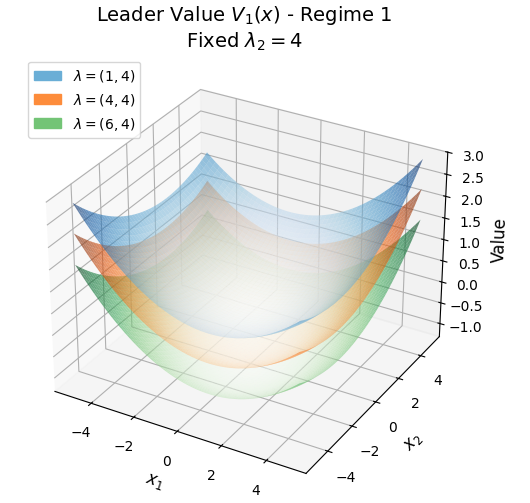}
	\caption{Evolution of the leader's value surface under different temperature parameters $\lambda$ with the follower's temperature parameter fixed in regime $1$.}
	\label{fig:img4}
\end{figure}

Figure \ref{fig:img5} illustrates a numerical comparison of the leader and follower value functions, $V_1(x,i)$ and $V_2(x,i)$, under two distinct regimes (regime 1 and regime 2) with the parameter $\lambda = (4, 4)$. In terms of surface morphology, both value functions exhibit typical convex quadratic characteristics. Numerical analysis reveals that the two regime mechanisms exert starkly opposing effects on the players. For the leader, the value function surface of regime 2 is uniformly higher than that of regime 1, implying that the leader incurs a higher cost under regime 2. Conversely, the follower's value function under regime 1 is significantly higher than under regime 2. Furthermore, its numerical magnitude (with a peak approaching $10$) is much larger than that of the leader (with a peak around $3.5$), indicating that the follower is far more sensitive to fluctuations in the state variables. This "cross-domination" phenomenon reveals the asymmetric impact of regime switching on the distribution of interests between the players, suggesting a fundamental drift in the structure of the game equilibrium across different states.
\begin{figure}[h!]
	\centering
	\includegraphics[width=3.4in]{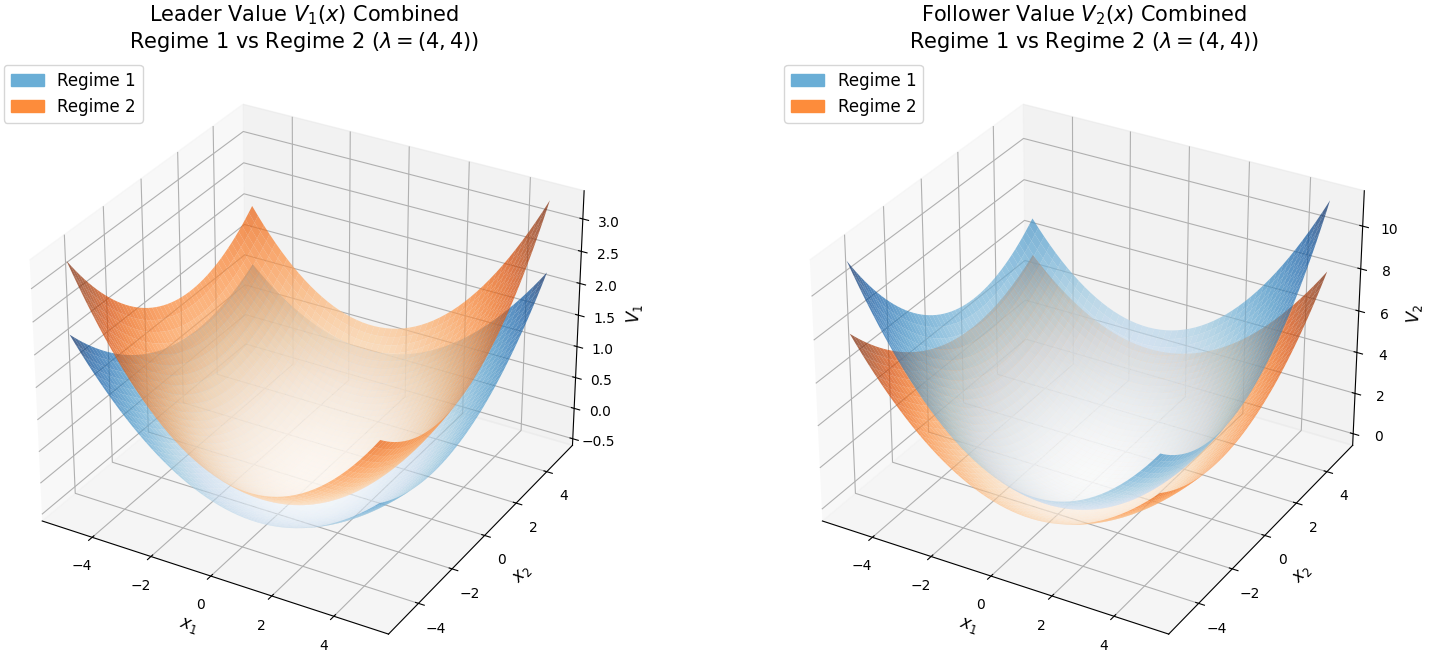}
	\caption{omparison of leader and follower value functions $V_1(x)$ and $V_2(x)$ across Regime 1 and Regime 2 with $\lambda = (4, 4)$.}
	\label{fig:img5}
\end{figure}

\subsection{Impact of temperature parameter on policy pair of players}
As shown in Fig. \ref{fig:img6}, we examine the equilibrium policy pairs under the following temperature parameter combinations: $\lambda_1,\lambda_2 \in \{(1, 1), (4, 4), (6, 6)\}$. The experimental results reveal the properties of the regime-dependent equilibrium policies for a given active regime $i \in \mathcal{M}$:
\begin{itemize}
\item \textbf{Mean Stability}: The mean of the optimal policy is highly robust to changes in temperature parameters within a specific regime, meaning that exploration intensity has little impact on the policy's expected behavior, which is primarily dictated by the system's current structural state $i$.
\item \textbf{Variance Sensitivity}: As the temperature parameter increases, the variance of the policy increases significantly, which reflects the exploration-exploitation trade-off in reinforcement learning---a higher temperature parameter promotes exploration by increasing policy entropy, manifested as a wider dispersion of the distribution to better anticipate and react to potential regime shifts.
\end{itemize}
\begin{figure}[h!]
	\centering
	\includegraphics[width=3.4in]{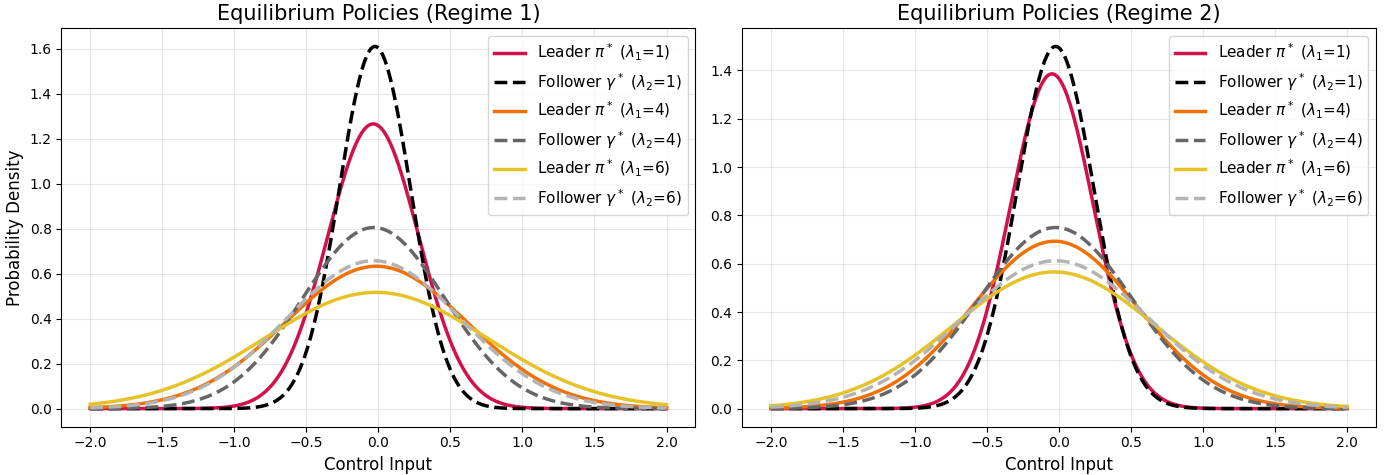}
	\caption{Equilibrium policies under varying temperature parameters.}
	\label{fig:img6}
\end{figure}

\subsection{Escaping Local Optima: An Illustrative Case Study}
This subsection uses an example to reveal the limitation of classical models, which are prone to falling into local optima. In contrast, the PIA algorithm adopted under our exploratory framework can effectively escape local optima and achieve the search for global optimal solutions.
For dynamic system \eqref{2.1} and performance function
\begin{equation}
J^{cl}_k(x, u, v) = \int_0^{\infty}e^{-\rho_k s} r_k(x_s, u_s, v_s) \, ds, \, k = 1, 2.
\end{equation}
Referring to the similar steps in Section \ref{Section 3}, we can obtain the optimal policy pair is
\begin{equation}\label{4.2}
\left\{\begin{array}{ll}
u^* = \frac{\theta_1 R_2^{-1} B_2^T \nabla V^{cl}_2(x)}{2(1 - \theta_1 \theta_2)} - \frac{R_1^{-1} B_1^T \nabla V^{cl}_1(x)}{2(1 - \theta_1 \theta_2)^2} + \frac{\theta_2 R_1^{-1} B_1^T \nabla V^{cl}_1(x)}{2(1 - \theta_1 \theta_2)^2},\\
v^* = -\theta_2 u^* - \frac{1}{2}R_2^{-1}B^T_2 \nabla V^{cl}_2(x),
\end{array} \right.
\end{equation}
and the corresponding coupled HJB equations are
\begin{equation}\label{4.3}
\begin{aligned}
r_2(x, u^*, v^*)+\nabla V^{cl}_2(x)^T(Ax + B_1u^* + B_2v^*)=\rho_2 V^{cl}_2(x),\\
r_1(x, u^*, v^*)+\nabla V^{cl}_1(x)^T(Ax + B_1u^* + B_2v^*)=\rho_1 V^{cl}_1(x).
\end{aligned}
\end{equation}
In the classical approach, we often suppose that the value function takes the form
\begin{equation}\label{4.5}
V^{cl}_k(x)=x^Tw_kx,\,k=1,2.
\end{equation}
Substituting \eqref{4.5} into \eqref{4.2} and \eqref{4.3} yields the following coupled Riccati equations
\begin{equation}\label{4.6}
\left\{\begin{array}{ll}
\rho_1w_1=Q_1+\hat{u}^TR_1\hat{u}+2\theta_1\hat{v}^TR_1\hat{u}+\theta_1^2\hat{v}^TR_1\hat{v}\\
\qq\q+2w_1A+2w_1B_1\hat{u}+2w_1B_2\hat{v},\\
\rho_2w_2=Q_2+\hat{v}^TR_2\hat{v}+2\theta_2\hat{u}^TR_2\hat{v}+\theta_2^2\hat{u}^TR_2\hat{u}\\
\qq\q+2w_2A+2w_2B_1\hat{u}+2w_2B_2\hat{v},\\
\end{array} \right.
\end{equation}
where
\begin{equation}
\begin{aligned}
\hat{u}=&\frac{\theta_1R_2^{-1} B_2^T w_2}{(1 - \theta_1 \theta_2)}-\frac{R_1^{-1} B_1^T w_1}{(1 - \theta_1 \theta_2)^2}+\frac{\theta_2R_1^{-1} B_1^T w_1}{(1-\theta_1 \theta_2)^2},\\
\hat{v}=&-\theta_2 \hat{u}-R_2^{-1}B^T_2w_2.
\end{aligned}
\end{equation}
Equation \eqref{4.6} is a system of nonlinear equations, and its solutions are often non-unique. Set
\begin{equation}
\begin{aligned}
R_1 =& R_2 = 1.0, \quad B_1 = 2.0, \quad B_2 = 1.0,\\
\theta_1 =& 0.6, \quad\theta_2 = 0.4,\quad A = 0.1,\\
 Q_1 =& 0.2, \quad Q_2 = 0.1,  \quad \rho_1 =1.5, \quad \rho_2 = 1.6.
\end{aligned}
\end{equation}
Equation \eqref{4.6} degenerates to
\begin{equation}
\left\{\begin{aligned}
&\frac{1500}{361} w_1^2 - \frac{10}{19} w_1 w_2 + \frac{13}{10} w_1 - \frac{1}{5} = 0,\\[8pt]
&\frac{2400}{361} w_1 w_2 - \frac{29}{19} w_2^2 + \frac{7}{5} w_2 - \frac{1}{10} = 0.
\end{aligned} \right.
\end{equation}
In this case, there are two sets of positive solutions
\begin{equation}
\begin{aligned}
(w_1, w_2) \approx& (0.1143037,0.0479209),\\
(w_1, w_2) \approx& (0.1724645,1.6282089).
\end{aligned}
\end{equation}
It is obvious that the first set is the optimal solution, while the second set is not. This also indicates that classical methods are prone to falling into local optima during the solving process.  At this point, the classical value function is
\begin{equation}
V^{cl}_1(x)=0.1143037x^2,\quad V^{cl}_2(x)=0.0479209x^2.
\end{equation}

In contrast, we use the PIA method to solve this problem within the RL framework. Specifically, we consider a single-regime scenario (i.e., the state space of the Markov chain is $\mathcal{M} = \{1\}$) and set $C = D_1 = D_2 = 0$. The activation function is $\varphi_1(x) = \varphi_2(x) = \begin{bmatrix} x^2 & x & 1 \end{bmatrix}^T$, where $\lambda_1 = 0.01$ and $\lambda_2 = 0.01$. The algorithm starts from $W_1^{(0)} = \begin{bmatrix} 1 & 1 & 1 \end{bmatrix}^T$, $W_2^{(0)} = \begin{bmatrix} 1 & 1 & 1 \end{bmatrix}^T$, $\alpha^{(0)} = 0$, $\beta^{(0)} = 0$, and the initial state is $x = 0.1$ under regime $i = 1$. One point is sampled every 0.05 seconds, and after collecting $M = 1000$ points, we obtain the following results.
\begin{figure}[h!]
	\centering
	\includegraphics[width=3.0in]{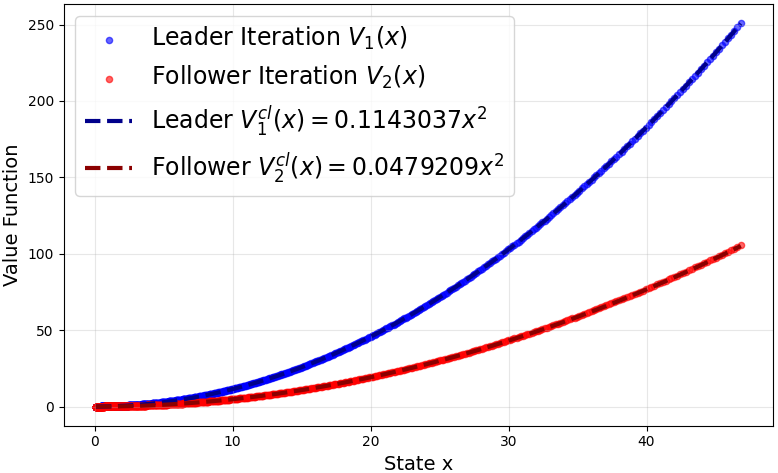}
	\caption{Value functions $V_1(x)$, $V_2(x)$, $V^{cl}_1(x)$, $V^{cl}_2(x)$.}
	\label{fig:img7}
\end{figure}
\begin{table}[h!]
\caption{Final weight vectors of the critic NNs\label{table:1}}
\centering
\begin{tabular}{|c|ccc|}
\hline
 & $W_{k1}$ & $W_{k2}$ & $W_{k3}$ \\
\hline
$k=1$ & $1.14419 \times 10^{-1}$ & $-1.17929 \times 10^{-10}$ & $1.27348 \times 10^{-2}$ \\
\hline
$k=2$ & $4.81057 \times 10^{-2}$ & $3.05420 \times 10^{-11}$ & $1.13139 \times 10^{-2}$ \\
\hline
\end{tabular}
\end{table}

Fig. \ref{fig:img7} displays the curve of the value function, while Table \ref{table:1} presents the final weight vectors of the critic NNs. The results indicate that with a low temperature parameter, the PIA can achieve a good approximation to the optimal value function, which shows that the RL framework plays a crucial role in preventing the algorithm from converging to local optima.

Mathematically, classical deterministic methods greedily exploit the current best solution and are highly prone to getting trapped in local optima. In contrast, the introduction of entropy regularization fosters a randomized policy. This mechanism maintains a certain degree of stochasticity, encouraging the algorithm to continuously explore the action space rather than prematurely committing to a local peak. By balancing exploration and exploitation, the randomized policy enables the system to evaluate a broader range of state-action trajectories, successfully escape the attraction of local optima, and ultimately discover the global optimal solution.

\section{Conclusion}
This paper investigates the LQ-SDG problem subject to Markovian regime switching within the ERRL framework. It begins by introducing the ERRL framework under regime-switching diffusions and clarifying the equilibrium policy pair problem. We then characterize the players' regime-dependent equilibrium policy pairs through the game's weakly-coupled value functions, demonstrating that these policies can effectively solve the formulated LQ-SDG problem. Subsequently, a policy improvement algorithm (PIA) is proposed to approximate the value functions and equilibrium policies, with a critic neural network architecture introduced to address the high-dimensional computational challenges inherent in multi-regime systems. Finally, the effectiveness of the proposed algorithms is verified through multiple numerical examples, while the model's convergence, the impact of temperature parameters, and robustness to structural shifts are analyzed. Specifically, we use a practical example to illustrate how this exploratory framework effectively avoids the \lq\lq local optimum trap\rq\rq.

In addition, the game framework constructed in this paper demonstrates excellent scalability: on one hand, its applicability can be extended from the current linear-quadratic regime-switching setting to nonlinear systems, and can also be expanded to multi-agent interaction scenarios under Markovian environments (a classical case is provided in \cite{li2020hierarchical}); on the other hand, this framework provides further space for algorithm optimization, such as exploring fully model-free RL algorithms that do not require prior knowledge of the control input matrices.

\appendices
\section{Exploratory dynamic system with regime switching}\label{Appendix A}
\setcounter{equation}{0}
\renewcommand{\theequation}{A.\arabic{equation}}
Consider the dynamic system with regime switching \eqref{2.6}, and let $x_s^{k,l}$, $k=1,\cdots,N, l=1,\cdots,M$, be the copies of the dynamic system respectively under the policies $(u_s^k,v_s^l)$, each independently sampled from $(\boldsymbol{\pi}_s, \boldsymbol{\gamma}_s)$ conditional on the current state $x_s$ and regime $\alpha_s$. Let $d s > 0$ be small enough and assume that the current regime $\alpha_s$, as well as the corresponding policies $(u_s^k,v_s^l)$, remain fixed from $s$ to $s+d s$. Then, the increments of these dynamic system copies are, for $k=1,\cdots,N, l=1,\cdots,M$,
\begin{equation*}\label{A.1}
\begin{aligned}
d x_s^{k,l} \equiv& x_{s+d s}^{k,l} - x_s^{k,l}\\
\approx& b(x_s^{k,l}, \alpha_s, u_s^k, v_s^l)d s\\
&+\sigma(x_s^{k,l}, \alpha_s, u_s^k, v_s^l)\left(W_{s+d s}^{k,l}- W_s^{k,l}\right),\, s\geq0.
\end{aligned}
\end{equation*}
Each such dynamic system $x_s^{k,l}, k=1,\cdots,N, l=1,\cdots,M$, can be viewed as an independent sample from the exploratory dynamic system $X_s^{\boldsymbol{\pi},\boldsymbol{\gamma}}$.

By conditioning on the current regime $\alpha_s = i \in \mathcal{M}$, it can be concluded from the law of large numbers that,
\begin{equation*}
\begin{aligned}\label{A.2}
&\lim _{N \rightarrow \infty}\lim _{M \rightarrow \infty}\frac{1}{N}\frac{1}{M} \sum_{k=1}^N\sum_{l=1}^M d x_s^{k,l}\\
\approx&\lim _{N \rightarrow \infty}\lim _{M \rightarrow \infty}\frac{1}{N}\frac{1}{M}\sum_{k=1}^N\sum_{l=1}^M b(x_s^{k,l}, i, u_s^k, v_s^l)ds\\
&+\lim _{N \rightarrow \infty}\lim _{M \rightarrow \infty}\frac{1}{N}\frac{1}{M} \sum_{k=1}^N\sum_{l=1}^M \\
&\quad \sigma(x_s^{k,l}, i, u_s^k, v_s^l)\left(W_{s+d s}^{k,l}- W_s^{k,l}\right)\\
\xrightarrow{\text{a.s.}}&\left\{\int_{\mathbb{R}^p}\left[\int_{\mathbb{R}^p}b(x_s, i, u, v)\boldsymbol{\pi}(u)\,du\right]\boldsymbol{\gamma}(v)\,dv\right\}d s,
\end{aligned}
\end{equation*}
and
\begin{equation*}\label{A.3}
\begin{aligned}
&\lim _{N \rightarrow \infty}\lim _{M \rightarrow \infty}\frac{1}{N}\frac{1}{M} \sum_{k=1}^N\sum_{l=1}^M d x_s^{k,l}\left(d x_s^{k,l}\right)^T\\
\approx&\lim _{N \rightarrow \infty}\lim _{M \rightarrow \infty}\frac{1}{N}\frac{1}{M} \sum_{k=1}^N\sum_{l=1}^M\\
&\sigma(x_s^{k,l}, i, u_s^k, v_s^l)\sigma^T(x_s^{k,l}, i, u_s^k, v_s^l)ds\xrightarrow{\text{a.s.}}\\
&\left\{\int_{\mathbb{R}^p}\left[\int_{\mathbb{R}^p}\sigma(x_s, i, u, v)\sigma^T(x_s, i, u, v)\boldsymbol{\pi}(u)du\right]\boldsymbol{\gamma}(v)dv\right\}d s.
\end{aligned}
\end{equation*}

Since each state $x_s^{k,l}$ is an independent sample from $X_s^{\boldsymbol{\pi},\boldsymbol{\gamma}}$, the increments $dx_s^{k,l}$ are independent samples from $d X_s^{\boldsymbol{\pi},\boldsymbol{\gamma}}$. Therefore, using the law of large numbers again, we obtain
\begin{equation}\label{A.4}
\lim _{N \rightarrow \infty}\lim _{M \rightarrow \infty}\frac{1}{N}\frac{1}{M} \sum_{k=1}^N\sum_{l=1}^M d x_s^{k,l}=d X_s^{\boldsymbol{\pi},\boldsymbol{\gamma}}.
\end{equation}
Thus, under the regime $\alpha_s = i$, the exploratory drift and diffusion are identified as
\begin{equation*}
\tilde{b}(x, i, \boldsymbol{\pi},\boldsymbol{\gamma}):=\int_{\mathbb{R}^p}\left[\int_{\mathbb{R}^p}b(x, i, u, v)\boldsymbol{\pi}(u)\,du\right]\boldsymbol{\gamma}(v)\,dv,
\end{equation*}
\begin{equation*}
\begin{aligned}
 \;&\tilde{\sigma}(x, i, \boldsymbol{\pi},\boldsymbol{\gamma}):=\\
 &\left\{\int_{\mathbb{R}^p}\left[\int_{\mathbb{R}^p}\sigma(x, i, u, v)\sigma^T(x, i, u, v)\boldsymbol{\pi}(u)du\right]\boldsymbol{\gamma}(v)dv\right\}^{1/2}.
\end{aligned}
\end{equation*}
Similarly, utilizing the law of large numbers, the exploratory performance functions under regime $i$ are
\begin{equation*}
\tilde{r}_1(x, i, \boldsymbol{\pi}, \boldsymbol{\gamma}):=\int_{\mathbb{R}^p}\left[\int_{\mathbb{R}^p}r_1(x, i, u, v)\boldsymbol{\pi}(u)\,du\right]\boldsymbol{\gamma}(v)\,dv,
\end{equation*}
\begin{equation*}
\tilde{r}_2(x, i, \boldsymbol{\pi}, \boldsymbol{\gamma}):=\int_{\mathbb{R}^p}\left[\int_{\mathbb{R}^p}r_2(x, i, u, v) \boldsymbol{\pi}(u)\,du\right]\boldsymbol{\gamma}(v)\,dv.
\end{equation*}
\section{Proof of Proposition \ref{Proposition 1}}\label{Appendix B}
\setcounter{equation}{0}
\renewcommand{\theequation}{B.\arabic{equation}}
\begin{proof}
For any $\boldsymbol{\pi}\in\mathcal{U}$, $\boldsymbol{\gamma}\in\mathcal{V}$ and fixed regime $i \in \mathcal{M}$, we have
\begin{itemize}
\item[($1$)]  Lipschitz condition:
\begin{equation*}
\begin{aligned}
&\left|\left|\tilde{b}(x, i, \boldsymbol{\pi},\boldsymbol{\gamma})-\tilde{b}(y, i, \boldsymbol{\pi},\boldsymbol{\gamma})\right|\right|\\
=&\left|\left|\int_{\mathbb{R}^p}\left[\int_{\mathbb{R}^p}b(x, i, u,v)-b(y, i, u,v)\boldsymbol{\pi}(u)\,du\right]\boldsymbol{\gamma}(v)\,dv\right|\right|\\
\leq&\int_{\mathbb{R}^p}\left[\int_{\mathbb{R}^p}\left|\left|b(x, i, u,v)-b(y, i, u,v)\right|\right|\boldsymbol{\pi}(u)\,du\right]\boldsymbol{\gamma}(v)\,dv\\
\leq&\int_{\mathbb{R}^p}\left[\int_{\mathbb{R}^p}\left|\left|A(i)x-A(i)y\right|\right|\boldsymbol{\pi}(u)\,du\right]\boldsymbol{\gamma}(v)\,dv\\
\leq&\left|\left|A(i)\right|\right|\left|\left|x-y\right|\right|.
\end{aligned}
\end{equation*}
Similarly,
\begin{equation*}
\left|\left|\tilde{\sigma}(x, i, \boldsymbol{\pi},\boldsymbol{\gamma})-\tilde{\sigma}(y, i, \boldsymbol{\pi},\boldsymbol{\gamma})\right|\right|\leq\left|\left|C(i)\right|\right|\left|\left|x-y\right|\right|.
\end{equation*}
\item[($2$)] Linear growth condition:
\begin{equation*}
\begin{aligned}
&\left|\left|\tilde{b}(x, i, \boldsymbol{\pi},\boldsymbol{\gamma})\right|\right|\\
=& \left|\left|\int_{\mathbb{R}^p}\left[\int_{\mathbb{R}^p}\left(A(i)x + B_1(i)u + B_2(i)v\right)\boldsymbol{\pi}(u)\,du\right]\boldsymbol{\gamma}(v)\,dv\right|\right|\\
\leq&\int_{\mathbb{R}^p}\left[\int_{\mathbb{R}^p}\left|\left|A(i)x + B_1(i)u + B_2(i)v\right|\right|\boldsymbol{\pi}(u)\,du\right]\boldsymbol{\gamma}(v)\,dv\\
\leq&\int_{\mathbb{R}^p}\left[\int_{\mathbb{R}^p}\left|\left|B_1(i)u + B_2(i)v\right|\right|\boldsymbol{\pi}(u)\,du\right]\boldsymbol{\gamma}(v)\,dv\\
&+\left|\left|A(i)\right|\right|\left|\left|x\right|\right|.
\end{aligned}
\end{equation*}
Similarly, we can prove $\tilde{\sigma}(x, i, \boldsymbol{\pi},\boldsymbol{\gamma})$ has linear growth.
\end{itemize}
By the classical existence and uniqueness theorem for stochastic differential equations (c.f. \cite{oksendal2013stochastic}, Theorem 5.2.1), \eqref{2.11} has a unique strong solution.
\end{proof}

\section{Proof of Lemma \ref{Lemma 1}}\label{Appendix C}
\setcounter{equation}{0}
\renewcommand{\theequation}{C.\arabic{equation}}
\begin{proof}
To simplify the notation, we denote $X_s^{\boldsymbol{\pi},\boldsymbol{\gamma}^*(\boldsymbol{\pi})}:=X_s$. According to Definition \ref{Definition 2},
\begin{equation*}
V_1(x, i) = \min_{\boldsymbol{\pi}\in\mathcal{U}} J_1(x, i, \boldsymbol{\pi},\boldsymbol{\gamma}^*(\boldsymbol{\pi})).
\end{equation*}
Let $h>0$ be arbitrary and applying dynamic programming principle,
\begin{equation}\label{C.2}
\begin{aligned}
V_1(x, i)&=\min_{\boldsymbol{\pi}\in\mathcal{U}}\e_{x,i}\Bigg[\int_{0}^{h}e^{-\rho_{1}s}\Big[\tilde{r}_1(X_s, \alpha_s, \boldsymbol{\pi}_s,\boldsymbol{\gamma}^*_s(\boldsymbol{\pi}))\\
& +\lambda_1\int_{\mathbb{R}^p} \boldsymbol{\pi}_s(u)\ln\boldsymbol{\pi}_s(u)\,du\Big]ds+e^{-\rho_1h}V_1(X_{h}, \alpha_h)\Bigg].
\end{aligned}
\end{equation}
Applying the generalized It\^{o}'s formula with Markovian switching to $e^{-\rho_{1}h}V_1(X_{h}, \alpha_h)$ and integrating from $0$ to $h$, we have
\begin{equation}\label{C.3}
\begin{aligned}
e^{-\rho_{1}h}&V_1(X_{h}, \alpha_h)-V_1(x, i)=\int_{0}^{h}e^{-\rho_{1}s}\Big[-\rho_{1}V_1(X_{s}, \alpha_s)\\
&+\nabla V_1(X_s, \alpha_s)^{T}\tilde{b}(X_s, \alpha_s, \boldsymbol{\pi}_s,\boldsymbol{\gamma}^*_s(\boldsymbol{\pi}))\\
&+\frac{1}{2}tr\left(\tilde{\sigma}(X_s, \alpha_s, \boldsymbol{\pi}_s,\boldsymbol{\gamma}^*_s(\boldsymbol{\pi}))\tilde{\sigma}^T(\cdots)\Delta V_1(X_s, \alpha_s)\right)\\
&+\sum_{j=1}^m q_{\alpha_s j}V_1(X_s, j)\Big]ds+ M_h \\
&+\int_{0}^{h}e^{-\rho_{1}s}\nabla V_1(X_s, \alpha_s)^{T}\tilde{\sigma}(X_s, \alpha_s, \boldsymbol{\pi}_s,\boldsymbol{\gamma}^*_s(\boldsymbol{\pi}))dW_s ,
\end{aligned}
\end{equation}
where $M_h$ is a zero-mean martingale arising from the pure jump process of the Markov chain. Substituting equation \eqref{C.3} back into equation \eqref{C.2}, taking expectation (where the integrals with respect to $dW_s$ and the martingale $M_h$ vanish), yields
\begin{equation*}
\begin{aligned}
\min_{\boldsymbol{\pi}\in\mathcal{U}}\e_{x,i}&\Bigg[\int_{0}^{h}e^{-\rho_{1}s}\Bigg[\tilde{r}_1(X_s, \alpha_s, \boldsymbol{\pi}_s,\boldsymbol{\gamma}^*_s(\boldsymbol{\pi}))\\
&+\lambda_1\int_{\mathbb{R}^p} \boldsymbol{\pi}_s(u)\ln\boldsymbol{\pi}_s(u)\,du-\rho_{1}V_1(X_{s}, \alpha_s)\\
&+\nabla V_1(X_s, \alpha_s)^{T}\tilde{b}(X_s, \alpha_s, \boldsymbol{\pi}_s,\boldsymbol{\gamma}^*_s(\boldsymbol{\pi}))\\
&+\frac{1}{2}tr\left(\tilde{\sigma}(X_s, \alpha_s, \boldsymbol{\pi}_s,\boldsymbol{\gamma}^*_s(\boldsymbol{\pi}))\tilde{\sigma}^T(\cdots)\Delta V_1(X_s, \alpha_s)\right)\\
&+\sum_{j=1}^m q_{\alpha_s j}V_1(X_s, j)\Bigg]ds\Bigg]=0.
\end{aligned}
\end{equation*}
Let $h\rightarrow 0$, divide by $h$, and apply the mean value theorem for integrals, and we obtain
\begin{equation*}
\begin{aligned}
\rho_1V_1(x, i)=&\min_{\boldsymbol{\pi}\in\mathcal{U}}\Big[\tilde{r}_1(x, i, \boldsymbol{\pi}, \boldsymbol{\gamma}^*(\boldsymbol{\pi}))+\lambda_1\int_{\mathbb{R}^p} \boldsymbol{\pi}(u)\ln \boldsymbol{\pi}(u)\,du\\
&+ \nabla V_1(x, i)^T\tilde{b}(x, i, \boldsymbol{\pi},\boldsymbol{\gamma}^*(\boldsymbol{\pi}))\\
&+\frac{1}{2}tr\left(\tilde{\sigma}(x, i, \boldsymbol{\pi},\boldsymbol{\gamma}^*(\boldsymbol{\pi}))\tilde{\sigma}^T(\cdots)\Delta V_1(x, i)\right)\\
&+\sum_{j=1}^m q_{ij}V_1(x, j)\Big].
\end{aligned}
\end{equation*}
We often rewrite this PDE in the following form
\begin{equation*}
\rho_1\phi_1(x, i)=\min_{\boldsymbol{\pi}\in\mathcal{U}}\mathcal{H}_1(x, i, \nabla \phi_1(x, i),\Delta \phi_1(x, i), \boldsymbol{\pi}, \boldsymbol{\gamma}^*(\boldsymbol{\pi})),
\end{equation*}
where $\phi_1$ is the solution to this PDE, and the specific form of $\mathcal{H}_1$ incorporates the regime switching jumps:
\begin{equation*}
\begin{aligned}
&\mathcal{H}_1(x, i, \nabla \phi_1(x, i),\Delta \phi_1(x, i), \boldsymbol{\pi}, \boldsymbol{\gamma}^*(\boldsymbol{\pi}))\\
:=&\tilde{r}_1(x, i, \boldsymbol{\pi}, \boldsymbol{\gamma}^*(\boldsymbol{\pi}))+\lambda_1\int_{\mathbb{R}^p} \boldsymbol{\pi}(u)\ln \boldsymbol{\pi}(u)\,du\\
&+\nabla \phi_1(x, i)^T\tilde{b}(x, i, \boldsymbol{\pi},\boldsymbol{\gamma}^*(\boldsymbol{\pi}))+\sum_{j=1}^m q_{ij}\phi_1(x, j)\\
&+\frac{1}{2}tr\left(\tilde{\sigma}(x, i, \boldsymbol{\pi},\boldsymbol{\gamma}^*(\boldsymbol{\pi}))\tilde{\sigma}^T(\cdots)\Delta \phi_1(x, i)\right).
\end{aligned}
\end{equation*}

We can also prove that if $V_2(\cdot, i; \boldsymbol{\pi}) \in C^{2}(\mathbb{R}^n)$ for each $i \in \mathcal{M}$, then $V_2(x, i; \boldsymbol{\pi})$ is a solution to the HJBI equation \eqref{3.2}. The proof is omitted.
\end{proof}

\section{Proof of Theorem \ref{Theorem 3}}\label{Appendix D}
\setcounter{equation}{0}
\renewcommand{\theequation}{D.\arabic{equation}}
\begin{proof}
To simplify the notation, we denote $X_s^{\boldsymbol{\pi},\boldsymbol{\gamma}^*(\boldsymbol{\pi})}:=X_s$. Suppose that $\boldsymbol{\pi}$ is an arbitrarily given policy for leader, and $\boldsymbol{\gamma}^*(\boldsymbol{\pi})$ is the  corresponding optimal response policy for follower.

Applying the generalized It\^{o}'s formula to $e^{-\rho_1 s}\phi_1(X_s, \alpha_s)$, we get
\begin{equation*}
\begin{aligned}
&d e^{-\rho_{1}s}\phi_1(X_s, \alpha_s)=\\
&e^{-\rho_{1}s}\Big[-\rho_{1}\phi_1(X_s, \alpha_s)+\nabla \phi_1(X_s, \alpha_s)^{T}\tilde{b}(X_s, \alpha_s, \boldsymbol{\pi}_s,\boldsymbol{\gamma}_s^*(\boldsymbol{\pi}))\\
&+\frac{1}{2}tr\left(\tilde{\sigma}(\cdot)\tilde{\sigma}^T(\cdot)\Delta \phi_1(X_s, \alpha_s)\right)+\sum_{j=1}^m q_{\alpha_s j}\phi_1(X_s, j)\Big]ds \\
&+e^{-\rho_{1}s}\nabla \phi_1(X_s, \alpha_s)^{T}\tilde{\sigma}(X_s, \alpha_s, \boldsymbol{\pi}_s,\boldsymbol{\gamma}_s^*(\boldsymbol{\pi}))dW_s + dM_s,
\end{aligned}
\end{equation*}
where $M_s$ is the associated jump martingale. Let $T > 0$ be arbitrary, and define the stopping times
\begin{equation*}
\begin{aligned}
\tau_{n}:=&\Big\{t\geq0:\int_{0}^{t}||e^{-\rho_1 s}\nabla \phi_1(X_s, \alpha_s)^{T}\\
&\tilde{\sigma}(X_s, \alpha_s, \boldsymbol{\pi}_s,\boldsymbol{\gamma}_s^*(\boldsymbol{\pi}))||^{2}ds\geq n\Big\},\, n\geq1.
\end{aligned}
\end{equation*}
Integrating from $0$ to $T\wedge\tau_{n}$ and taking expectation $\e_{x,i}[\cdot]$ on both sides, we have
\begin{equation*}
\begin{aligned}
&\e_{x,i}\left[e^{-\rho_1(T\wedge\tau_{n})}\phi_1(X_{T\wedge\tau_{n}}, \alpha_{T\wedge\tau_{n}})\right]-\phi_1(x, i)\\
=&\e_{x,i}\Bigg[\int_{0}^{T\wedge\tau_{n}}e^{-\rho_{1}s}\Big[-\rho_{1}\phi_1(X_s, \alpha_s)\\
&+\nabla \phi_1(X_s, \alpha_s)^{T}\tilde{b}(X_s, \alpha_s, \boldsymbol{\pi}_s,\boldsymbol{\gamma}_s^*(\boldsymbol{\pi}))\\
&+\frac{1}{2}tr(\tilde{\sigma}(\cdot)\tilde{\sigma}^T(\cdot)\Delta \phi_1(X_s, \alpha_s))+\sum_{j=1}^m q_{\alpha_s j}\phi_1(X_s, j)\Big]ds\Bigg].
\end{aligned}
\end{equation*}
Setting $n \to \infty$, we deduce that
\begin{equation*}
\begin{aligned}
&\e_{x,i}\left[e^{-\rho_1T}\phi_1(X_{T}, \alpha_T)\right]-\phi_1(x, i)\\
=&\e_{x,i}\Bigg[\int_{0}^{T}e^{-\rho_{1}s}\Big[-\rho_{1}\phi_1(X_s, \alpha_s)\\
&+\nabla \phi_1(X_s, \alpha_s)^{T}\tilde{b}(X_s, \alpha_s, \boldsymbol{\pi}_s,\boldsymbol{\gamma}_s^*(\boldsymbol{\pi}))\\
&+\frac{1}{2}tr(\tilde{\sigma}(\cdot)\tilde{\sigma}^T(\cdot)\Delta \phi_1(X_s, \alpha_s))+\sum_{j=1}^m q_{\alpha_s j}\phi_1(X_s, j)\Big]ds\Bigg].
\end{aligned}
\end{equation*}
Next, using the condition (iv) in Assumption \ref{assumption 1} and applying the dominated convergence theorem yields
\begin{equation*}
\begin{aligned}
-\phi_1(x, i)=&\e_{x,i}\Bigg[\int_{0}^{\infty}e^{-\rho_{1}s}\Big[-\rho_{1}\phi_1(X_s, \alpha_s)\\
&+\nabla \phi_1(X_s, \alpha_s)^{T}\tilde{b}(X_s, \alpha_s, \boldsymbol{\pi}_s,\boldsymbol{\gamma}_s^*(\boldsymbol{\pi}))\\
&+\frac{1}{2}tr(\tilde{\sigma}(X_s, \alpha_s, \boldsymbol{\pi}_s, \boldsymbol{\gamma}_s^*(\boldsymbol{\pi}))\tilde{\sigma}^T(\dots)\Delta \phi_1(X_s, \alpha_s))\\
&+\sum_{j=1}^m q_{\alpha_s j}\phi_1(X_s, j)\Big]ds\Bigg].
\end{aligned}
\end{equation*}
Adding the same item
\begin{equation*}
\begin{aligned}
&\e_{x,i}\left[\int_0^{\infty}e^{-\rho_1 s} \left[\tilde{r}_1(X_s, \alpha_s, \boldsymbol{\pi}_s, \boldsymbol{\gamma}^*_s(\boldsymbol{\pi}))\right.\right.\\
&\qquad\left.\left.+\lambda_1\int_{\mathbb{R}^p} \boldsymbol{\pi}_s(u)\ln \boldsymbol{\pi}_s(u)\,du\,\right] ds\right]
\end{aligned}
\end{equation*}
on both sides, with the definition of the exploratory performance function \eqref{2.12} of leader, we have
\begin{equation*}
\begin{aligned}
&J_1(x, i, \boldsymbol{\pi},\boldsymbol{\gamma}^*(\boldsymbol{\pi}))-\phi_1(x, i)\\
=&\e_{x,i}\Big[\int_{0}^{\infty}e^{-\rho_{1}s}[-\rho_{1}\phi_1(X_s, \alpha_s)\\
&+\mathcal{H}_1(X_s, \alpha_s, \nabla \phi_1(X_s, \alpha_s) ,\Delta \phi_1(X_s, \alpha_s), \boldsymbol{\pi}_s, \boldsymbol{\gamma}_s^*(\boldsymbol{\pi}))]ds\Big].
\end{aligned}
\end{equation*}
On the other hand,
\begin{equation*}
\begin{aligned}
&\rho_1\phi_1(X_s^{\boldsymbol{\pi}^*, \boldsymbol{\gamma}^*}, \alpha_s)\\
=&\mathcal{H}_1(X_s^{\boldsymbol{\pi}^*, \boldsymbol{\gamma}^*}, \alpha_s, \nabla \phi_1(X_s^{\boldsymbol{\pi}^*, \boldsymbol{\gamma}^*}, \alpha_s) ,\Delta \phi_1(X_s^{\boldsymbol{\pi}^*, \boldsymbol{\gamma}^*}, \alpha_s), \boldsymbol{\pi}_s^*, \boldsymbol{\gamma}_s^*)\\
\leq&\mathcal{H}_1(X_s, \alpha_s, \nabla \phi_1(X_s, \alpha_s) ,\Delta \phi_1(X_s, \alpha_s), \boldsymbol{\pi}_s, \boldsymbol{\gamma}_s^*(\boldsymbol{\pi})), \,\forall \boldsymbol{\pi}\in\mathcal{U}.
\end{aligned}
\end{equation*}
Therefore,
\begin{equation*}
\begin{aligned}
&J_1(x, i, \boldsymbol{\pi},\boldsymbol{\gamma}^*(\boldsymbol{\pi}))-\phi_1(x, i)\\
\geq&\e_{x,i}\Big[\int_{0}^{\infty}e^{-\rho_{1}s}[-\rho_{1}\phi_1(X_s^{\boldsymbol{\pi}^*, \boldsymbol{\gamma}^*}, \alpha_s)\\
&+\mathcal{H}_1(X_s^{\boldsymbol{\pi}^*, \boldsymbol{\gamma}^*}, \alpha_s, \cdots)]ds\Big]\\
=&0=J_1(x, i, \boldsymbol{\pi}^*,\boldsymbol{\gamma}^*)-\phi_1(x, i).
\end{aligned}
\end{equation*}
The first and second equations of Theorem \ref{Theorem 3} are proved.
The proofs proces for the third and fourth equations of Theorem \ref{Theorem 3} are similar and  omitted here.
\end{proof}

\section{Proof of Theorem \ref{Theorem 5}}\label{Appendix E}
\setcounter{equation}{0}
\renewcommand{\theequation}{E.\arabic{equation}}
\begin{proof}
Let $X_{s}^{\boldsymbol{\gamma}}$ (resp., $X_{s}^{\tilde{\boldsymbol{\gamma}}}$), $s \in [0, \infty)$ be the state process under policy pair $(\boldsymbol{\pi},\boldsymbol{\gamma}(\boldsymbol{\pi}))$ (resp., $(\boldsymbol{\pi},\tilde{\boldsymbol{\gamma}}(\boldsymbol{\pi}))$). First applying the generalized It\^{o}'s formula,
\begin{equation*}
\begin{aligned}
&de^{-\rho_{2}s}J_2(X_s^{\boldsymbol{\gamma}}, \alpha_s, \boldsymbol{\pi}_s,\boldsymbol{\gamma}_s(\boldsymbol{\pi}))\\
=&e^{-\rho_{2}s}\Big[ \nabla J_2(X_s^{\boldsymbol{\gamma}}, \alpha_s, \boldsymbol{\pi}_s,\boldsymbol{\gamma}_s(\boldsymbol{\pi}))^{T}\tilde{b}(X_s^{\boldsymbol{\gamma}}, \alpha_s, \boldsymbol{\pi}_s,\boldsymbol{\gamma}_s(\boldsymbol{\pi}))\\
&+\frac{1}{2}tr\left(\tilde{\sigma}(X_s^{\boldsymbol{\gamma}}, \alpha_s, \boldsymbol{\pi}_s,\boldsymbol{\gamma}_s(\boldsymbol{\pi}))\tilde{\sigma}^T(X_s^{\boldsymbol{\gamma}}, \alpha_s, \boldsymbol{\pi}_s, \boldsymbol{\gamma}_s(\boldsymbol{\pi}))\right.\\
&\quad\left.\Delta J_2(X_s^{\boldsymbol{\gamma}}, \alpha_s, \boldsymbol{\pi}_s,\boldsymbol{\gamma}_s(\boldsymbol{\pi}))\right)-\rho_{2}J_2(X_s^{\boldsymbol{\gamma}}, \alpha_s, \boldsymbol{\pi}_s,\boldsymbol{\gamma}_s(\boldsymbol{\pi}))\\
&+\sum_{j=1}^m q_{\alpha_s j}J_2(X_s^{\boldsymbol{\gamma}}, j, \boldsymbol{\pi}_s,\boldsymbol{\gamma}_s(\boldsymbol{\pi}))\Big]ds + dM_s \\
&+e^{-\rho_{2}s}\nabla J_2(X_s^{\boldsymbol{\gamma}}, \alpha_s, \boldsymbol{\pi}_s,\boldsymbol{\gamma}_s(\boldsymbol{\pi}))^{T}\tilde{\sigma}(X_s^{\boldsymbol{\gamma}}, \alpha_s, \boldsymbol{\pi}_s,\boldsymbol{\gamma}_s(\boldsymbol{\pi}))dW_s.
\end{aligned}
\end{equation*}
Define the stopping times
\begin{equation*}
\begin{aligned}
\tau_{n}:=&\Big\{t\geq0:\int_{0}^{t}||e^{-\rho_2s}\nabla J_2(X_s^{\boldsymbol{\gamma}}, \alpha_s, \boldsymbol{\pi}_s,\boldsymbol{\gamma}_s(\boldsymbol{\pi}))^{T}\\
&\tilde{\sigma}(X_s^{\boldsymbol{\gamma}}, \alpha_s, \boldsymbol{\pi}_s,\boldsymbol{\gamma}_s(\boldsymbol{\pi}))||^{2}ds\geq n\Big\}
\end{aligned}
\end{equation*}
for $n\geq1$. Integrating from $0$ to $\tau_{n}$ and taking expectation $\e_{x,i}[\cdot]$ on both sides, we have
\begin{equation*}
\begin{aligned}
&J_2(x, i, \boldsymbol{\pi},\boldsymbol{\gamma}(\boldsymbol{\pi}))\\
=&\e_{x,i}\Bigg[-\int_{0}^{\tau_{n}}e^{-\rho_{2}s}\Big[ \nabla J_2(\dots)^{T}\tilde{b}(X_s^{\boldsymbol{\gamma}}, \alpha_s, \boldsymbol{\pi}_s,\boldsymbol{\gamma}_s(\boldsymbol{\pi}))\\
&+\frac{1}{2}tr\left(\tilde{\sigma}(X_s^{\boldsymbol{\gamma}}, \alpha_s, \boldsymbol{\pi}_s,\boldsymbol{\gamma}_s(\boldsymbol{\pi}))\tilde{\sigma}^T(\dots)\Delta J_2(\dots)\right)\\
&-\rho_{2}J_2(X_s^{\boldsymbol{\gamma}}, \alpha_s, \boldsymbol{\pi}_s,\boldsymbol{\gamma}_s(\boldsymbol{\pi}))+\sum_{j=1}^m q_{\alpha_s j}J_2(\cdots)\Big]ds \\
&+e^{-\rho_2\tau_{n}}J_2(X_{\tau_{n}}^{\boldsymbol{\gamma}}, \alpha_{\tau_n}, \boldsymbol{\pi}_{\tau_{n}},\boldsymbol{\gamma}_{\tau_{n}}(\boldsymbol{\pi}))\Bigg].
\end{aligned}
\end{equation*}
On the other hand,
\begin{equation}\label{5.7}
\begin{aligned}
&\rho_2J_2(x, i, \boldsymbol{\pi},\boldsymbol{\gamma}(\boldsymbol{\pi}))\\=&\mathcal{H}_2(x, i, \nabla J_2(x, i, \boldsymbol{\pi},\boldsymbol{\gamma}(\boldsymbol{\pi})) ,\Delta J_2(x, i, \boldsymbol{\pi},\boldsymbol{\gamma}(\boldsymbol{\pi})), \boldsymbol{\pi}, \boldsymbol{\gamma}(\boldsymbol{\pi}))\\
\geq&\min_{\boldsymbol{\gamma}^{\prime}}\mathcal{H}_2(x, i, \nabla J_2(x, i, \boldsymbol{\pi},\boldsymbol{\gamma}^{\prime}(\boldsymbol{\pi})) ,\Delta J_2(\dots), \boldsymbol{\pi}, \boldsymbol{\gamma}^{\prime}(\boldsymbol{\pi})).
\end{aligned}
\end{equation}
Notice that the minimizer of the Hamiltonian in \eqref{5.7} is given by the feedback policy \eqref{5.1}. It then follows that
\begin{equation*}
\begin{aligned}
&J_2(x, i, \boldsymbol{\pi},\boldsymbol{\gamma}(\boldsymbol{\pi}))\\
\geq&\e_{x,i}\Bigg[\int_{0}^{\tau_{n}}e^{-\rho_{2}s}\left[\tilde{r}_2(X_s^{\tilde{\boldsymbol{\gamma}}}, \alpha_s, \boldsymbol{\pi}_s,\tilde{\boldsymbol{\gamma}}_s(\boldsymbol{\pi}))\right.\\
&\left.+\lambda_2\int_{\mathbb{R}^p} \tilde{\boldsymbol{\gamma}}_s(\boldsymbol{\pi})(v)\ln \tilde{\boldsymbol{\gamma}}_s(\boldsymbol{\pi})(v)\,dv\right]ds\\
&+e^{-\rho_2\tau_{n}}J_2(X_{\tau_{n}}^{\boldsymbol{\gamma}}, \alpha_{\tau_n}, \boldsymbol{\pi}_{\tau_{n}},\boldsymbol{\gamma}_{\tau_{n}}(\boldsymbol{\pi}))\Bigg].
\end{aligned}
\end{equation*}
Setting $n \to \infty$ and applying the condition (iv) in Assumption \ref{assumption 1}, we obtain
\begin{equation*}
\begin{aligned}
&J_2(x, i, \boldsymbol{\pi},\boldsymbol{\gamma}(\boldsymbol{\pi}))\\
\geq&\e_{x,i}\left[\int_{0}^{\infty}e^{-\rho_{2}s}\Big[\tilde{r}_2(X_s^{\tilde{\boldsymbol{\gamma}}}, \alpha_s, \boldsymbol{\pi}_s,\tilde{\boldsymbol{\gamma}}_s(\boldsymbol{\pi}))\right.\\
&\left.+\lambda_2\int_{\mathbb{R}^p} \tilde{\boldsymbol{\gamma}}_s(\boldsymbol{\pi})(v)\ln \tilde{\boldsymbol{\gamma}}_s(\boldsymbol{\pi})(v)\,dv\right]ds\Big]=J_2(x, i, \boldsymbol{\pi},\tilde{\boldsymbol{\gamma}}(\boldsymbol{\pi})).
\end{aligned}
\end{equation*}
\end{proof}
\bibliographystyle{IEEEtran}
\bibliography{HCD2025}

\end{document}